\def\BState{\State\hskip-\ALG@thistlm}
\pgfplotsset{compat=1.12}
\newtheorem{thm}{Theorem}
\newtheorem{lem}{Lemma}
\newtheorem{asmp}{Assumption}
\newcommand{\myvec}{\mathrm{vec}}
\newcommand{\tr}{\mathrm{tr}}
\newcommand{\real}[1]{\mathbb{R}^{{#1}}}
\newcommand{\innerprod}[1]{\langle {#1} \rangle}
\newcommand{\norm}[1]{\lVert {#1} \rVert}
\newcommand{\rank}{\mathrm{rank}}
\begin{document}
%
\title{Expectile Matrix Factorization for Skewed Data Analysis}
\author{Rui Zhu$^1$, Di Niu$^1$, Linglong Kong$^2$, and Zongpeng Li$^3$\\
${}^1$ Department of Electrical and Computer Engineering, University of Alberta, \{\texttt{rzhu3, dniu}\}\texttt{@ualberta.ca}\\
${}^2$ Department of Mathematical and Statistical Sciences, University of Alberta, \texttt{lkong@ualberta.ca}\\
${}^3$ Department of Computer Science, University of Calgary, \texttt{zongpeng@ucalgary.ca}
}
\maketitle

\begin{abstract}
	Matrix factorization is a popular approach to solving matrix estimation problems based on partial observations. Existing matrix factorization is based on least squares and aims to yield a low-rank matrix to interpret the conditional sample means given the observations. However, in many real applications with skewed and extreme data, least squares cannot explain their central tendency or tail distributions, yielding undesired estimates. In this paper, we propose \emph{expectile matrix factorization} by introducing asymmetric least squares, a key concept in expectile regression analysis, into the matrix factorization framework. We propose an efficient algorithm to solve the new problem based on alternating minimization and quadratic programming. We prove that our algorithm converges to a global optimum and exactly recovers the true underlying low-rank matrices when noise is zero. For synthetic data with skewed noise and a real-world dataset containing web service response times, the proposed scheme achieves lower recovery errors than the existing matrix factorization method based on least squares in a wide range of settings.
\end{abstract}

\section{Introduction}
\label{sec:intro}

Matrix estimation has wide applications in many fields such as recommendation systems \cite{koren2009matrix}, network latency estimation \cite{liao2013dmfsgd}, computer vision \cite{chen2004recovering}, system identification \cite{liu2009interior}, etc. In these problems, a low-rank matrix $M^*\in \mathbb R^{m\times n}$ or a linear mapping $\mathcal A(M^*)$ from the low-rank matrix $M^*$ is assumed to underlie some possibly noisy observations, where $\mathcal A: \mathbb R^{m\times n}\rightarrow \mathbb R^p$. The objective is to recover the underlying low-rank matrix based on partial observations $b_i$, $i=1,\ldots,p$. For example, a movie recommendation system aims to recover all user-movie preferences based on the ratings between some user-movie pairs \cite{koren2009matrix,su2009survey}, or based on implicit feedback, e.g., watching times/frequencies, that are logged for some users on some movies \cite{hu2008collaborative,rendle2009bpr}.
In network or web service latency estimation \cite{liao2013dmfsgd,liu2015network,zheng2014investigating}, given partially collected latency measurements between some nodes that are possibly contaminated by noise, the goal is to recover the underlying low-rank latency matrix, which is present due to network path and function correlations.


Matrix factorization is a popular approach for low-rank matrix estimation, in which the underlying matrix $M^*\in \mathbb R^{m\times n}$ is assumed to be $M^* = XY^{\sf T}$, with $X\in \mathbb R^{m\times k}$ and $Y\in \mathbb R^{n\times k}$, such that the rank of $M^*$ is enforced to $k$. The goal is to find $\hat M$ that minimizes the aggregate loss of the estimation $\mathcal A(\hat M)$ on all observed samples $b_i$, $i=1,\ldots,p$.
Matrix factorization problems, although being nonconvex, can be solved efficiently at a large scale by several standard optimization methods such as alternating minimization and stochastic gradient descent.
As a result, matrix factorization has gained enormous success in real-world recommender systems, e.g., Netflix Prize competition \cite{koren2009matrix}, and large-scale network latency estimation, e.g., DMFSGD \cite{liao2013dmfsgd}, due to its scalability, low computation cost per iteration, and the ease of distributed implementation.
In contrast, another approach to matrix estimation and completion, namely nuclear-norm minimization \cite{candes2010power,candes2010matrix} based on SVT \cite{cai2010singular} or proximal gradient methods \cite{ma2011fixed}, is relatively less scalable to problems of huge sizes due to high computational cost per iteration \cite{sun2015guaranteed}.
Recently, a few studies \cite{sun2015guaranteed,jain2013low,zhao2015nonconvex} have also theoretically shown that many optimization algorithms converge to the global optimality of the matrix factorization formulation, and can recover the underlying true low-rank matrix under certain conditions.

Nevertheless, a common limitation of almost all existing studies on matrix estimation is that they have ignored the fact that observations in practice could be highly skewed and do not follow symmetric normal distributions in many applications.
For example, latencies to web services over the Internet are highly skewed, in that most measurements are within hundreds of milliseconds while a small portion of outliers could be over several seconds due to network congestion or temporary service unavailability \cite{zheng2014investigating,liu2015network}.
In a video recommender system based on implicit feedback (e.g., user viewing history), the watching time is also highly skewed, in the sense that a user may watch most videos for a short period of time and only finish a few videos that he or she truly likes \cite{hu2008collaborative}.

In other words, the majority of existing matrix factorization methods are based on least squares and attempt to produce a low-rank matrix $\hat M$ such that $\mathcal A(\hat M)$ estimates the conditional means of observations.
However, in the presence of extreme and skewed data, this may incur large biases and may not fulfill practical requirements. For example, in web service latency estimation, we want to find the \emph{most probable} latency between each client-service pair instead of its conditional mean that is biased towards large outliers. Alternatively, one may be interested in finding the tail latencies and exclude the services with long latency tails from being recommended to a client. Similarly, in recommender systems based on implicit feedback, predicting the conditional mean watching time of each user on a video is meaningless due to the skewness of watching times. Instead, we may want to find out the most likely time length that the user might spend on the video, and based the recommendation on that.
For asymmetric, skewed and heavy-tailed data that are prevalent in the real world, new matrix factorization techniques need to be developed beyond symmetric least squares, in order to achieve robustness to outliers and to better interpret the central tendency or dispersion of observations.

In this paper, we propose the concept of \emph{expectile matrix factorization (EMF)} by replacing the symmetric least squares loss function in conventional matrix factorization with a loss function similar to those used in expectile regression \cite{newey1987asymmetric}.
Our scheme is different from weighted matrix factorization \cite{singh2008unified}, in that we not only assign different weights to different residuals, but assign each weight \emph{conditioned on whether the residual is positive or negative}.
Intuitively speaking, our expectile matrix factorization problem aims to produce a low-rank matrix $\hat M$ such that $\mathcal A(\hat M)$ can estimate any $\omega$th conditional expectiles of the observations, not only enhancing the robustness to outliers, but also offering more sophisticated statistical understanding of observations from a matrix beyond mean statistics. 

We make multiple contributions in this paper. \emph{First}, we propose an efficient algorithm based on alternating minimization and quadratic programming to solve expectile matrix factorization, which has low complexity similar to that of alternating least squares in conventional matrix factorization. \emph{Second}, we theoretically prove that under certain conditions, expectile matrix factorization retains the desirable properties that without noise, it achieves the global optimality and exactly recovers the true underlying low-rank matrices. This result generalizes the prior result \cite{zhao2015nonconvex} regarding the optimality of alternating minimization for matrix estimation under the symmetric least squares loss (corresponding to $\omega = 0.5$ in EMF) to a general class of ``asymmetric least squares'' loss functions for any $\omega\in (0,1)$. 
The results are obtained by adapting a powerful tool we have developed on the theoretical properties of weighted matrix factorization involving varying weights across iterations.
\emph{Third}, for data generated from a low-rank matrix contaminated by skewed noise, we show that our schemes can achieve better approximation to the original low-rank matrix than conventional matrix factorization based on least squares. \emph{Finally}, we also performed extensive evaluation based on a real-world dataset containing web service response times between 339 clients and 5825 web services distributed worldwide. We show that the proposed EMF saliently outperforms the state-of-the-art matrix factorization scheme based on least squares in terms of web service latency recovery from only 5-10\% of samples.




\textbf{Notation}: Without specification, any vector $v=(v_1,\ldots,v_p)^{\sf T} \in \real{p}$ is a column vector. We denote its $l_p$ norm as $\norm{v}_p = \left(\sum_{j}v_j^p\right)^{1/p}$. For a matrix $A \in \real{m \times n}$, we denote $A_{ij}$ as its $(i,j)$-entry. We denote the singular values of $A$ as $\sigma_1(A) \geq \sigma_2(A) \geq \ldots \geq \sigma_k(A)$, where $k = \rank(A)$. Sometimes we also denote $\sigma_{\max}(A)$ as its maximum singular value and $\sigma_{\min}(A)$ as its minimum singular value. We denote $\norm{A}_F = \sqrt{\sum_{j}\sigma_j^2}$ as its Frobenius norm and $\norm{A}_2 = \sigma_{\max}(A)$ as its spectral norm.
For any two matrices $A, B \in \real{m \times n}$, we denote their inner product $\innerprod{A, B} = \tr(A^{\sf T}B) = \sum_{i,j}A_{ij}B_{ij}$.
For a bivariate function $f(x, y)$, we denote the partial gradient w.r.t. $x$ as $\nabla_x f(x, y)$ and that w.r.t. $y$ as $\nabla_y f(x, y)$. 

\section{Expectile Matrix Factorization}
\label{sec:prelim}

Given a linear mapping $\mathcal{A}: \real{m \times n} \to \real{p}$, we can get $p$ observations of an $m \times n$ matrix $M^* \in \real{m\times n}$.
In particular, we can decompose the linear mapping $\mathcal{A}$ into $p$ inner products, i.e., $\innerprod{A_i, M^*}$ for $i=1,\ldots,p$, with $A_i \in \real{m \times n}$.
Denote the $p$ observations by a column vector $b = (b_1,\ldots,b_p)^{\sf T} \in \real{p}$, where $b_i$ is the observation of $\innerprod{A_i, M^*}$ and may contain independent random noise.
The matrix estimation problem is to recover the underlying true matrix $M^*$ from observations $b$, assuming that $M^*$ has a low rank.


Matrix factorization assumes that the matrix $M^*$ has a rank no more than $k$, and can be factorized into two tall matrices $X\in \mathbb R^{m\times k}$ and $Y\in \mathbb R^{n\times k}$ with $k \ll \{m, n, p\}$.
Specifically, it estimates $M^*$ by solving the following non-convex optimization problem:
\begin{equation*}
    \mathop{\min}_{X \in \real{m \times k}, Y \in \real{n \times k}}
    \sum_{i=1}^{p} \mathcal{L} (b_i, \innerprod{A_i, M})
\quad \text{s.t.}\quad M = XY^{\sf T},
\end{equation*}
where $\mathcal{L}(\cdot, \cdot)$ is a loss function. We denote the optimal solution to the problem above by $\hat{M}$. 

The most common loss function used in matrix factorization is the squared loss $(b_i - \innerprod{A_i, XY^{\sf T}})^2$, with which the problem is to minimize the mean squared error (MSE):
\begin{equation}
    \mathop{\min}_{X \in \real{m \times k}, Y \in \real{n \times k}}
    \sum_{i=1}^{p} \frac{1}{2}(b_i - \innerprod{A_i, XY^{\sf T}})^2.
    \label{prob:mse}
\end{equation}
Just like linear regression based on least squares, \eqref{prob:mse} actually aims to produce an $\hat M$ which estimates the conditional mean of $M^*$ given partial observations. For symmetric Gaussian noise, the conditional mean is the most efficient estimator.
However, for skewed or heavy-tailed noise, the conditional mean can be far away from the central area where elements of the true $M^*$ are distributed. In these cases, we need to develop new techniques to better characterize the central tendency, dispersion and tail behavior of observations, beyond mean statistics.


Quantile regression \cite{koenker1978regression} is a type of regression analysis originated in statistics and econometrics and is more robust against outliers especially in heavy-tailed response measurements. However, in quantile regression, we need to minimize a non-smooth check loss function which is more computationally involving.

Similar to quantile regression, expectile regression \cite{newey1987asymmetric} is also a regression technique that achieves robustness against outliers, while in the meantime is more computationally efficient than quantile regression by adopting a smooth loss function. In particular, suppose samples $\{(x_i,y_i),i=1,\ldots,n\}$ are generated from a linear model $y_i = x_i^{\sf T}\beta^* +\varepsilon_i$, where $x_i = (1,x_{i1},\ldots,x_{ip})^{\sf T}\in \mathbb R^{p+1}$ are predictors and $y_i\in\mathbb R$ is the response variable.
The expectile regression estimates $\beta^*$ by solving
{\small \[ \underset{\beta}{\text{minimize}}\quad\sum_{i=1}^n\rho_{\omega}^{[2]}(y_i - x_i^{\sf T}\beta), \]}
where for a chosen constant $\omega \in (0,1)$,  $\rho_\omega^{[2]}(\cdot)$
is the ``asymmetric least squares'' loss function given by
\begin{equation*}
\rho_\omega^{[2]}(t) := t^2\cdot | \omega- \mathbbm{1}(t < 0) |,
\end{equation*}
where $\mathbbm{1}(t<0)$ is the indicator function such that it equals to $1$ if $t<0$ and $0$ otherwise.

Fig.~\ref{fig:exp_loss_func} illustrates the shape of $\rho_\omega^{[2]}(
\cdot)$. When $\omega<0.5$, we can see that the cost of a positive residual is lower than that of a negative residual, thus encouraging a smaller estimate $\hat y_i$ for the response variable, and vice versa when $\omega>0.5$. This fact implies that when the response variable $y_i$ are not Gaussian but highly skewed, we can choose an $\omega$ to push $\hat y_i$ to its most probable area (i.e., the mode or median) while being robust to outliers, as shown in Fig.~\ref{fig:exp_intuit}.



We now extend expectile regression to the case of matrix estimation. Formally, define $r_i := b_i - \innerprod{A_i, XY^{\sf T}}$ as the residual for $b_i$. Then, in loss minimization, we weight each squared residual $r_i^2$ by either $\omega$ or $1-\omega$, conditioned on whether it is positive or negative. Therefore, we formulate \emph{expectile matrix factorization} (EMF) as the following problem:
{\small
\begin{equation}
    \mathop{\min}_{X \in \real{m \times k}, Y \in \real{n \times k}}
    F(X, Y) := \sum_{i=1}^{p} \rho_\omega^{[2]}(b_i-\innerprod{A_i, XY^{\sf T}} ).
    \label{prob:expectile}
\end{equation}
}
Apparently, the MSE-based approach \eqref{prob:mse} is a special case of problem \eqref{prob:expectile} by setting $\omega = 0.5$, which places equal weights on both positive and negative residuals.

\begin{figure}
    \centering
    \subfigure[$\rho_\omega^{[2]}(\cdot)$]{
\begin{tikzpicture}[scale=0.6]
    \draw [very thick, ->] (-2,0) -- (2,0);
    \draw [very thick, ->] (0, -0.5) -- (0,3);
    \draw [cyan, dashed, very thick, domain=0:2] plot(\x, {0.1*\x*\x}) node[right] {\small $\omega=0.1$};
    \draw [cyan, dashed, very thick, domain=-2:0] plot(\x, {0.9*\x*\x});
    \draw [blue, ultra thick, domain=-2:2] plot(\x, {0.5*\x*\x}) node[right] {\small $\omega=0.5$};
    \node [below left] at (0,0) {0};
    \node [below right] at (2,0) {\small $x$};
    \node [left] at (0, 3) {\small $y$};
\end{tikzpicture}

        \label{fig:exp_loss_func}
    }
    \hspace{-3mm}
    \subfigure[Expectile Regression]{
    \begin{tikzpicture}[scale=0.6,domain=.001:10,samples=200,thick]
    \draw [dashed, color=gray] (1.05,0) -- (1.05, 2.5);
    \draw [dashed, color=gray] (2.0,0) -- (2.0, 2.5);
    \draw [dashed, color=gray] (3.42,0) -- (3.42, 2.5);
   	\node[above] at (1.05, 2.5) {\footnotesize $\omega=0.1$};
    \node[below] at (2.0, 0) {\footnotesize  $\omega=0.5$};
    \node[above] at (3.42, 2.5) {\footnotesize $\omega=0.9$};

	\fill[color=cyan, yscale=10, xscale=0.5, domain=0.001:2.1] (0,0) -- plot (\x,{exp(ln(\x/2)*3/2-ln(\x)-\x/2-ln(sqrt(pi)/2))}) -- (2.1, 0);
	\fill[color=magenta, yscale=10, xscale=0.5, domain=6.84:10] (6.84,0) -- plot (\x,{exp(ln(\x/2)*3/2-ln(\x)-\x/2-ln(sqrt(pi)/2))}) -- (10, 0);
    \draw[very thick, <->] (0, 3) -- (0, 0) -- (6, 0);
    \draw[color=black, yscale=10, xscale=0.5, ultra thick] (0,0) -- plot (\x,{exp(ln(\x/2)*3/2-ln(\x)-\x/2-ln(sqrt(pi)/2))});

	\node[left, rotate=90] at (-0.5, 3) {\footnotesize Density of $y_i$};
  \end{tikzpicture}
        \label{fig:exp_intuit}
    }
    \caption{(a) The asymmetric least squares loss function, placing different weights on positive residuals and negative residuals.
(b) For a skewed $\chi^2_3$ distribution, expectile regression with $\omega = 0.1$ generates an estimate closer to the mode than the conditional mean ($\omega = 0.5$) does due to the long tail.}
\label{fig:intuition}
\end{figure}
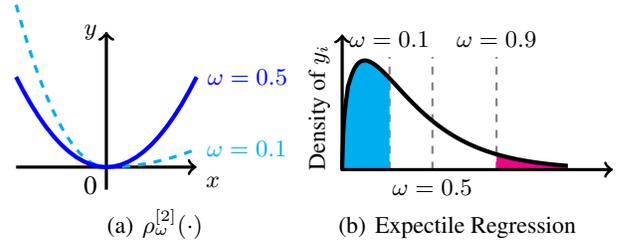


Note that expectile matrix factorization proposed above is different from weighted MSE \cite{singh2008unified}, where a different yet fixed (predefined) weight is assigned to different residuals. In expectile matrix factorization, each weight is either $\omega$ or $1-\omega$, depending on whether the residual of the estimate is positive or negative, i.e., we do not know the assignments of weights before solving the optimization problem.
In other words, problem \eqref{prob:expectile} estimates an $\hat M$ such that each $\innerprod{A_i, \hat{M}}$ estimates the $\omega$th \emph{conditional expectile} \cite{newey1987asymmetric} of $b_i$.
In the meantime, expectiles are based on second-order moments and thus it is feasible to solve EMF efficiently, which we show in the next section.

Just like expectile regression, the main attraction of expectile matrix factorization goes beyond robustness to outliers.
Being able to estimate any $\omega$th expectile of observations, EMF can characterize different measures of central tendency and statistical dispersion, and is useful to obtain a more comprehensive understanding of data distribution. For example, if we are interested in the tail behavior, we could set $\omega = 0.9$ and if we are interested in the conditional median in a highly skewed dataset, we could set $\omega<0.5$.

\section{Algorithm and Theoretical Results}
\label{sec:algorithm}

We propose an efficient algorithm to solve expectile matrix factorization
via a combined use of alternating minimization and quadratic programming, as shown in Algorithm~\ref{alg:bcd_expectile}, with complexity similar to that of alternating least squares in conventional matrix factorization. 
To better approach potential optimal solutions, we first sum up all measurement matrices $A_i$ weighted by $b_i$, and perform Singular Value Decomposition (SVD) to get top $k$ singular values.

\begin{algorithm}[htbp]
\caption{Alternating minimization for expectile matrix factorization. In this algorithm, we use $\bar{X}$ to highlight that $\bar{X}$ is orthonormal.}
\label{alg:bcd_expectile}
\begin{algorithmic}[1]
\State \textbf{Input}: observations $b = (b_1,\ldots,b_p)^{\sf T} \in \real{p}$, measurement matrices $A_i \in \real{m \times n}$, $i=1,\ldots, p$.
\State \textbf{Parameter}: Maximum number of iterations $T$
\State $(\bar{X}^{(0)}, D^{(0)}, \bar{Y}^{(0)})=\texttt{SVD}_k(\sum_{i=1}^{p}b_i A_i)$
\Comment{Singular Value Decomposition to get top $k$ singular values}
\For {$t=0$ to $T-1$}
    \State $Y^{(t+0.5)} \gets \arg\min_{Y} F(\bar{X}^{(t)}, Y)$
    \State $\bar{Y}^{(t+1)} \gets \texttt{QR}(Y^{(t+0.5)})$ \Comment{QR decomposition}
    \State $X^{(t+0.5)} \gets \arg\min_{X} F(X, \bar{Y}^{(t+1)})$
    \State $\bar{X}^{(t+1)} \gets \texttt{QR}(X^{(t+0.5)})$

\EndFor
\State \textbf{Output}: $M^{(T)} \gets X^{(T-0.5)}\bar{Y}^{(T)\sf T}$
\end{algorithmic}
\end{algorithm}

The QR decompositions in Step 6 and Step 8 are \emph{not necessary} and are only included here to simplify the presentation of theoretical analysis.
QR decomposition ensures the orthonormal property: given an orthonormal matrix $X$ (or $Y$), the objective function $F(X, Y)$ is strongly convex and smooth with respect to $Y$ (or $X$), as shown in the appendix.
However, it has been proved \cite{jain2013low} that when $\omega=0.5$,  alternating minimization with and without QR decomposition are equivalent. The same conclusion also holds for all $\omega$. Therefore, in performance evaluation, we do not have to and did not apply QR decomposition.

The subproblems in Step 5 and Step 7 can be solved efficiently with standard quadratic program (QP) solvers after some reformulation. We now illustrate such equivalence to QP for Step 5, which minimizes $F(\bar{X}, Y)$ given $\bar{X}$. Let $r_i^+:=\max(r_i, 0)$ denote the positive part of residual $r_i$, and $r_i^- := -\min(r_i, 0)$ denote the negative part of $r_i$. We have $r_i = r_i^+ - r_i^-$, and the asymmetric least squares loss can be rewritten as
\begin{equation*}
    \rho_{\omega}^{[2]}(r_i) = \omega (r_i^+)^2 + (1-\omega)(r_i^-)^2.
\end{equation*}
Given $\bar{X}$, we have
\[ \mathcal{A}(\bar{X}Y^{\sf T}) = \{ \innerprod{A_i, \bar{X}Y^{\sf T}} \}_{i=1}^p = \{ \innerprod{A_i^{\sf T}\bar{X}, Y} \}_{i=1}^p.\]

Let $r^+ = (r_1^+,\ldots,r_p^+)^{\sf T}$ and $r^- = (r_1^-,\ldots,r_p^-)^{\sf T}$. For simplicity, let $\mathcal{A}_1(Y):=\mathcal{A}(\bar{X}Y^{\sf T})$. Then, minimizing $F(\bar{X}, Y)$ given $\bar{X}$ in Step 5 is equivalent to the following QP:
{\small
\begin{equation}
    \begin{split}
        \underset{Y \in \real{n \times k} , r^+, r^-\in \real{p}_+}{\text{min}} \quad &
        \omega \norm{r^+}_2^2 + (1-\omega)\norm{r^-}_2^2 \\
        \text{s.t.} \quad & r^+ - r^- = b - \mathcal{A}_1(Y).
    \end{split}
    \label{prob:sub_expectile}
\end{equation}
}
Similarly, Step 7 can be reformulated as a QP as well.

Steps 5 and 7 can be solved even more efficiently in the matrix completion case, which aims at recovering an incomplete low-rank matrix from a few observed entries and is a special case of the matrix estimation problem under discussion, where each $b_i$ is simply an observation of a matrix element (possibly with noise). In matrix completion, we can decompose the above QP in Steps 5 and 7 by updating each row of $X$ (or $Y$), whose time complexity in practice is similar to conventional alternating least squares, e.g., \cite{koren2009matrix}, which also solve QPs.



\begin{figure*}[!htb]
  \centering
  \subfigure[Sampling rate $R=0.05$]{
	\includegraphics[width=2.7in]{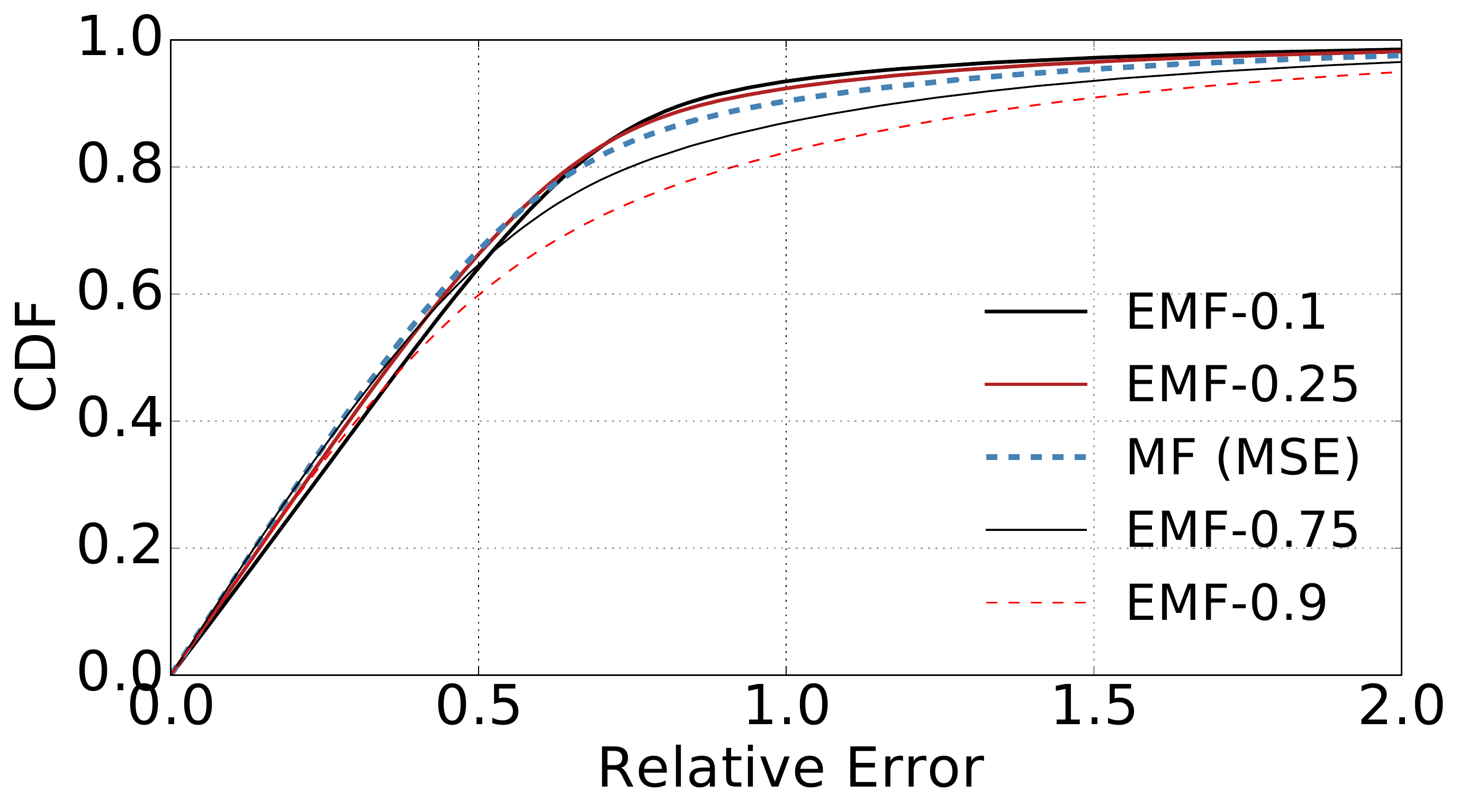}
	\label{fig:exp05}
  }
  \subfigure[Sampling rate $R=0.1$]{
    \includegraphics[width=2.7in]{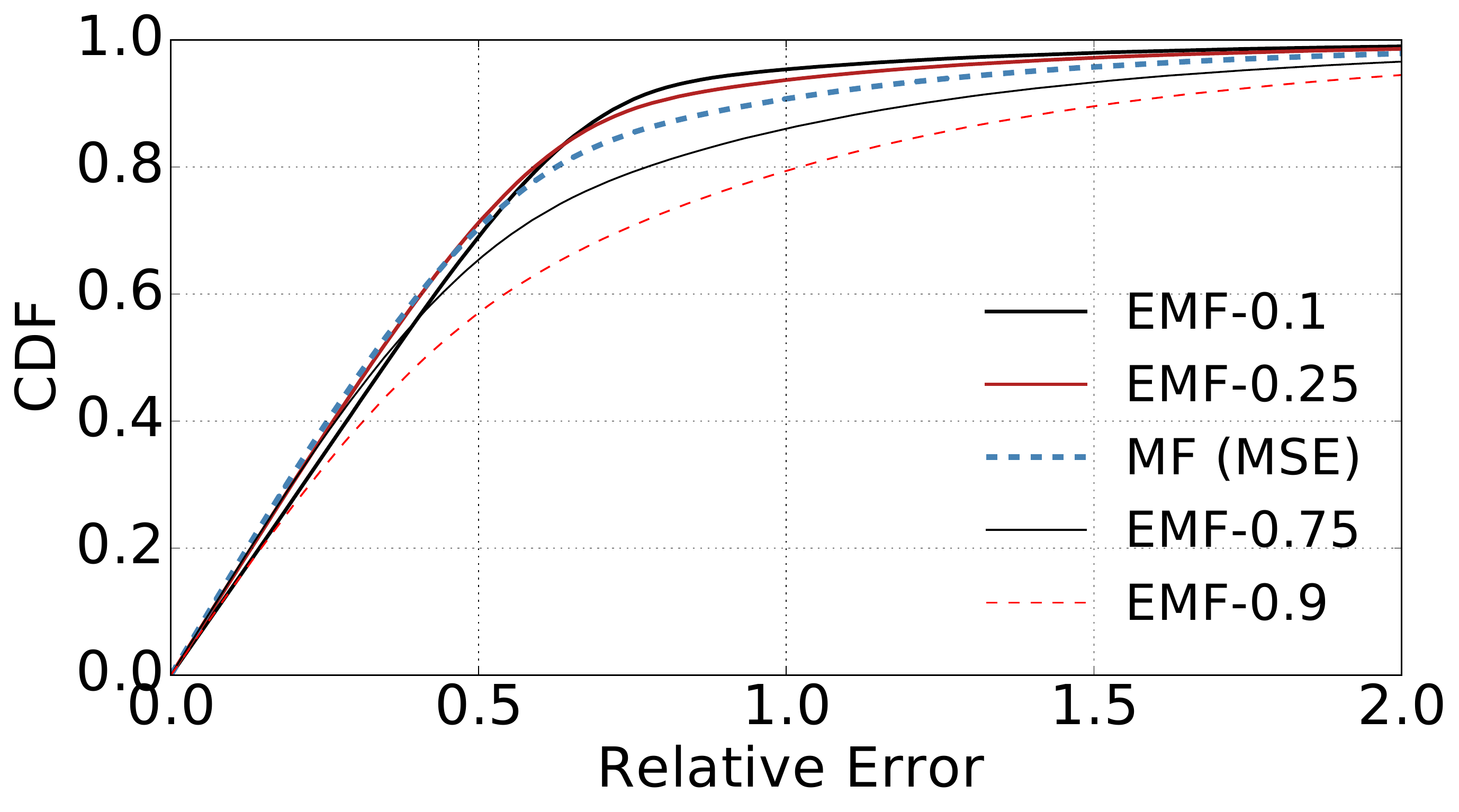}
    \label{fig:exp10}
  }
  \caption{CDF of relative errors via expectile matrix factorization on synthetic $1000\times 1000$ matrices with skewed noise.}  
  \label{fig:compareMC}
\end{figure*}

We now show that the proposed algorithm for expectile matrix factorization retains the optimality for any $\omega \in (0,1)$ when observations are noiseless, i.e., the produced $M^{(T)}$ will eventually approach the true low-rank matrix $M^*$ to be recovered. We generalize the recent result  \cite{zhao2015nonconvex} of the optimality of alternating minimization for matrix estimation under the symmetric least squares loss function (corresponding to $\omega = 0.5$ in EMF) to a general class of ``asymmetric least squares'' loss functions with any $\omega\in (0,1)$.

We assume that the linear mapping $\mathcal{A}$ satisfies the well-known $2k$-RIP condition \cite{jain2013low}:
\begin{asmp}[$2k$-RIP]
There exists a constant $\delta_{2k} \in (0,1)$ such that for any matrix $M$ with rank at most $2k$, the following property holds:
    \[ (1-\delta_{2k}) \lVert {M}\rVert_F^2
    \leq \lVert \mathcal{A}({M}) \rVert_2^2
    \leq (1+\delta_{2k}) \lVert {M}\rVert_F^2.\]
\end{asmp}
A linear mapping $\mathcal{A}$ satisfying the RIP condition can be obtained in various ways. For example, if each entry of $A_i$ is independently drawn from the sub-Gaussian distribution, then $\mathcal{A}$ satisfies $2k$-RIP property with high probability for $p=\Omega(\delta_{2k}^{-2} kn\log n)$ \cite{jain2013low}.

Clearly, Algorithm~\ref{alg:bcd_expectile} involves minimizing a weighted sum of squared losses in the form of 
$$\mbox{$\mathcal{F}(X, Y)=\sum_{i=1}^p w_i (b_i - \innerprod{A_i, XY^{\sf T}})^2$},$$
although the weight $w_i$  depends on the sign of residual $r_i$ and may vary in each iteration.
We show that if the weights $w_i$ are confined within a closed interval $[w_-, w_+]$ with constants $w_-, w_+ > 0$, then the alternating minimization algorithm for the weighted sum of squared losses will converge to an optimal point. Without loss of generality, we can assume that $w_- \leq 1/2 \leq w_+$ and $w_- + w_+ = 1$ by weight normalization.

First, we show the geometric convergence of alternating minimization for weighted matrix factorization, if all weights belong to $[w_-, w_+]$ in each iteration:
\begin{thm}
    Assume that the linear mapping $\mathcal{A}(\cdot)$ satisfies $2k$-RIP condition with $\delta_{2k} \leq C_1/k\cdot w_-^2/w_+^2$ for some small constant $C_1$, and assume that the singular values of $M^*$ are bounded in the range of $[\Sigma_{\min}, \Sigma_{\max}]$, where $\Sigma_{\min}$ and $\Sigma_{\max}$ are constants and do not scale with the matrix size. Suppose the weights in $\mathcal{F}(X, Y)$ are bounded by two positive finite constants, i.e., $w_i \in [w_-, w_+]$ with $0 < w_- \leq 1/2 \leq w_+ < 1$ and $w_- + w_+ = 1$.
    Then, given any desired precision $\varepsilon$, there exists a constant $C_2$ such that by applying alternating minimization to $\mathcal{F}(X, Y)$, the solution $M^{(T)}$ satisfies $\norm{M^{(T)} - M^*}_F \leq \varepsilon$ for all $T \geq O(\log (C_2/\varepsilon) + \log(w_-/w_+) )$.
    \label{thm:weighted}
\end{thm}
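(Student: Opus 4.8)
The plan is to adapt the RIP-based analysis of alternating minimization for matrix factorization (\cite{jain2013low,zhao2015nonconvex}, which treat the unweighted case $\omega=1/2$) to the weighted normal operator $\mathcal{H}:=\mathcal{A}^{*}W\mathcal{A}$ with $W=\mathrm{diag}(w_1,\dots,w_p)$. First I would record the ``weighted RIP'' consequences of Assumption~1: for every $D$ of rank at most $2k$, $w_-(1-\delta_{2k})\norm{D}_F^2\le\langle\mathcal{H}(D),D\rangle\le w_+(1+\delta_{2k})\norm{D}_F^2$, so that for orthonormal $\bar{X}$ the map $Y\mapsto\mathcal{F}(\bar{X},Y)$ is $2w_-(1-\delta_{2k})$-strongly convex and $2w_+(1+\delta_{2k})$-smooth (and symmetrically in $X$). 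Then Theorem~\ref{thm:weighted} reduces to two ingredients: (i) a one-step \emph{contraction lemma} — writing the SVD $M^{*}=U^{*}\Sigma^{*}(V^{*})^{\sf T}$ and $d_t$ for the principal-angle distance of the current iterate's column space to the matching factor of $M^{*}$, each half-step of Algorithm~\ref{alg:bcd_expectile} satisfies $d_{t+1}\le\rho\, d_t$ for a fixed $\rho\in(0,1)$; and (ii) an \emph{initialization bound} placing $(\bar{X}^{(0)},\bar{Y}^{(0)})$ inside the basin where (i) holds. Chaining (i) from (ii) and converting $d_T$ back into $\norm{M^{(T)}-M^{*}}_F$ yields the iteration count; the additive $\log(w_-/w_+)$ appears because the SVD initialization turns out to be \emph{closer} by a factor $\sim(w_-/w_+)^2$ (the admissible $\delta_{2k}$ carries that factor) while the final distance-to-error conversion is \emph{lossy} by a factor $\sim w_+/w_-$.

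For the contraction lemma I would use the first-order optimality of $Y^{(t+0.5)}=\arg\min_Y\mathcal{F}(\bar{X}^{(t)},Y)$, namely $(\bar{X}^{(t)})^{\sf T}\mathcal{H}\!\big(M^{*}-\bar{X}^{(t)}(Y^{(t+0.5)})^{\sf T}\big)=0$. Split $M^{*}=\bar{X}^{(t)}N+\bar{X}^{(t)}_\perp P$ with $N=(\bar{X}^{(t)})^{\sf T}M^{*}$, $P=(\bar{X}^{(t)}_\perp)^{\sf T}M^{*}$, so $\norm{P}_F\lesssim\Sigma_{\max}\,d_t$. Setting $F:=N-(Y^{(t+0.5)})^{\sf T}$, optimality becomes $\mathcal{B}(F)=-(\bar{X}^{(t)})^{\sf T}\mathcal{H}(\bar{X}^{(t)}_\perp P)$, where $\mathcal{B}(\cdot):=(\bar{X}^{(t)})^{\sf T}\mathcal{H}(\bar{X}^{(t)}\,\cdot\,)$ is symmetric positive definite on $\real{k\times n}$ with spectrum in $[w_-(1-\delta_{2k}),w_+(1+\delta_{2k})]$ by the weighted RIP; hence $\norm{F}_F\le\big(w_-(1-\delta_{2k})\big)^{-1}\norm{(\bar{X}^{(t)})^{\sf T}\mathcal{H}(\bar{X}^{(t)}_\perp P)}_F$. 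The last factor is a weighted cross-term, which I would try to bound by polarization against $2k$-RIP, exploiting that $\bar{X}^{(t)}_\perp P$ and any $\bar{X}^{(t)}H$ have orthogonal column spaces. Finally, since $Y^{(t+0.5)}=V^{*}\Sigma^{*}(U^{*})^{\sf T}\bar{X}^{(t)}-F^{\sf T}$ and $\bar{Y}^{(t+1)}$ spans the column space of $Y^{(t+0.5)}$, left-multiplying the identity $0=(\bar{Y}^{(t+1)}_\perp)^{\sf T}Y^{(t+0.5)}$ gives $(\bar{Y}^{(t+1)}_\perp)^{\sf T}V^{*}\Sigma^{*}(U^{*})^{\sf T}\bar{X}^{(t)}=(\bar{Y}^{(t+1)}_\perp)^{\sf T}F^{\sf T}$; combined with $\sigma_{\min}\!\big(\Sigma^{*}(U^{*})^{\sf T}\bar{X}^{(t)}\big)\ge\Sigma_{\min}\sqrt{1-d_t^{2}}$ this yields $d_{t+1}\lesssim\Sigma_{\min}^{-1}\norm{F}_F$, hence a strict contraction as soon as the compound constant is below one.

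The main obstacle in this approach is exactly the weighted cross-term. In the unweighted case $\mathcal{A}^{*}\mathcal{A}$ is a near-isometry, so $\norm{\bar{X}^{\sf T}(\mathcal{A}^{*}\mathcal{A})(\bar{X}_\perp P)}_F\le\delta_{2k}\norm{P}_F$ and the whole chain contracts once $\delta_{2k}$ is merely small. But $\mathcal{H}=\mathcal{A}^{*}W\mathcal{A}$ is \emph{not} close to any multiple of the identity — vectors in $\real{p}$ that are nearly orthogonal need not be $W$-orthogonal — so the crude polarization bound on the cross-term is only of order $w_+-w_-$, which does not shrink with $\delta_{2k}$ and by itself does not force $\rho<1$. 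Obtaining the needed $\rho<1$ therefore requires the finer, iteration-robust accounting announced in the introduction as our ``tool'' for weighted matrix factorization: one must extract the cancellation between $\mathcal{B}$ and the off-diagonal block of $\mathcal{H}$ so that the surviving error is proportional to $\delta_{2k}$ times the blow-up factors coming from (a) passing between spectral and Frobenius norms of rank-$2k$ matrices (powers of $k$) and (b) the conditioning $w_+/w_-$ of $\mathcal{B}$; balancing this against $\delta_{2k}\le C_1 w_-^2/(k w_+^2)$ then pins $\rho$ strictly below one. Crucially, every estimate above uses only the uniform containment $w_i\in[w_-,w_+]$, so the argument is indifferent to the weights changing from iteration to iteration (in Algorithm~\ref{alg:bcd_expectile} each $w_i\in\{\omega,1-\omega\}$ tracks the sign of the current residual), and a single $\rho$ governs all steps.

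For the initialization, noiseless observations give $\sum_i b_i A_i=\mathcal{A}^{*}\mathcal{A}(M^{*})$, and $2k$-RIP yields $\norm{\mathcal{A}^{*}\mathcal{A}(M^{*})-M^{*}}_2\le\delta_{2k}\norm{M^{*}}_F$ after projecting onto the rank-$2k$ span of $M^{*}$ and its truncated SVD. Weyl's inequality together with the Davis--Kahan/Wedin theorem bounds the principal-angle distance of $(\bar{X}^{(0)},\bar{Y}^{(0)})$ from $(U^{*},V^{*})$ by $O\big(\delta_{2k}\sqrt{k}\,\Sigma_{\max}/\Sigma_{\min}\big)$, which under $\delta_{2k}\le C_1 w_-^2/(k w_+^2)$ and a small enough absolute $C_1$ is within the basin demanded by the contraction lemma. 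Iterating (i) for $T$ steps and converting $d_T$ back through $\norm{M^{(T)}-M^{*}}_F\lesssim(w_+/w_-)\sqrt{k}\,\Sigma_{\max}\,d_T$ gives $\norm{M^{(T)}-M^{*}}_F\le(w_-/w_+)\,C_2\,\rho^{T}$ with $C_2$ depending only on $\Sigma_{\min},\Sigma_{\max},k$; hence $\norm{M^{(T)}-M^{*}}_F\le\varepsilon$ as soon as $T\ge(\log\rho^{-1})^{-1}\big(\log(C_2/\varepsilon)+\log(w_-/w_+)\big)$, which is the stated bound.
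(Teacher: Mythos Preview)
Your high-level architecture matches the paper's: establish block-wise strong convexity and smoothness with constants $w_-(1-\delta_{2k})$ and $w_+(1+\delta_{2k})$ (this is Lemma~\ref{lem:convex}), prove a one-step contraction from the first-order optimality condition (Lemmas~\ref{lem:dec1} and~\ref{lem:descent}), verify that the SVD initialization lands in the contraction basin (Lemma~\ref{lem:initial}), and telescope. One structural difference: you track the principal-angle distance to the \emph{fixed} SVD factors $U^*,V^*$ and invoke Davis--Kahan/Wedin, whereas the paper carries a \emph{moving} factorization $M^*=\bar U^{(t)}(V^{(t)})^{\sf T}$ with $\bar U^{(t)}$ orthonormal and close to $\bar X^{(t)}$, refreshed after each QR step via Lemma~\ref{lem:qr_dec} (from \cite{zhao2015nonconvex}); the contraction then reads $\|\bar Y^{(t+1)}-\bar V^{(t+1)}\|_F\le\xi^{-1}\|\bar X^{(t)}-\bar U^{(t)}\|_F$ directly in Frobenius distance. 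Also, in the paper the additive $\log(w_-/w_+)$ arises solely because the initialization radius scales like $\sqrt{\delta_{2k}k}\lesssim w_-/w_+$; there is no $w_+/w_-$ loss in the final conversion $\|M^{(T)}-M^*\|_F\le \|X^{(T-0.5)}-U^{(T)}\|_F+\sigma_1\|\bar Y^{(T)}-\bar V^{(T)}\|_F$, so your accounting of that term is slightly off.

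The genuine gap is precisely where you flag it. You correctly observe that crude polarization of the weighted cross-term $(\bar X)^{\sf T}\mathcal H(\bar X_\perp P)$ against the weighted RIP gives only $O(w_+-w_-)$, not $O(\delta_{2k})$, and you then say ``one must extract the cancellation between $\mathcal B$ and the off-diagonal block'' without supplying any mechanism. That is not a proof. The paper's concrete device (proof of Lemma~\ref{lem:dec1-1}) is an explicit algebraic splitting: with $S_\omega,J_\omega$ the vectorized Hessians at $\bar X,\bar U$ and $K:=\bar X^{\sf T}\bar U\otimes I_n$, write
\[
S_\omega^{-1}J_\omega-I_{nk}=(K-I_{nk})+S_\omega^{-1}(J_\omega-S_\omega K).
\]
The first piece contributes only $\|(K-I)v\|\le\sigma_1\|\bar X-\bar U\|_F^2$, which is quadratically small. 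For the second piece one computes
\[
z_1^{\sf T}(S_\omega K-J_\omega)z_2=\sum_i 2w_i\,\langle A_i,\bar XZ_1^{\sf T}\rangle\,\langle A_i,(\bar X\bar X^{\sf T}-I_m)\bar UZ_2^{\sf T}\rangle,
\]
notes that the two rank-$k$ arguments are Frobenius-orthogonal because $\bar X^{\sf T}(\bar X\bar X^{\sf T}-I_m)=0$, replaces each $w_i$ by $w_+$, and then invokes the \emph{unweighted} RIP polarization bound (Lemma~\ref{lem:dec1-0}) to obtain $\|J_\omega-S_\omega K\|_2\le 6w_+\delta_{2k}\sqrt{2k}\,\|\bar X-\bar U\|_F$. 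Combined with $\|S_\omega^{-1}\|\le(w_-(1-\delta_{2k}))^{-1}$, $\|G_\omega\|\le w_+(1+\delta_{2k})$, and $\|V\|_F\le\sigma_1\sqrt k$, this is what drives the contraction under $\delta_{2k}\lesssim w_-^2/(kw_+^2)$. Your proposal needs this computation (or an equivalent one) in place of the appeal to an unspecified cancellation; note in particular that the step hinges on passing from the weighted sum to the unweighted inner product before applying Lemma~\ref{lem:dec1-0}, which is exactly the point you identified as delicate.
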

The detailed proof of the above theorem is quite involving and is included in the supplemental material. Theorem~\ref{thm:weighted} implies that the weighted matrix factorization can geometrically converge to a global optimum. Note that the negative term $\log(w_- / w_+)$ does not imply that weighted matrix factorization converges faster, since the value of $C_2$ for two $w$'s may differ. In fact, due to the lower RIP constant $\delta_{2k}$, the convergence rate in the case of $w_- \neq w_+$ is usually slower than that in the case of $w_- = w_+$.

In Algorithm~\ref{alg:bcd_expectile} for expectile matrix factorization, the weight $w_i$ in each iteration for residual $r_i$ is $\omega$ if $r_i \geq 0$, and is $1-\omega$ otherwise. Although $w_i$ is changing across iterations, we can choose $w_- = \min(\omega, 1-\omega)$ and $w_+=\max(\omega, 1-\omega)$, both satisfying the assumptions in Theorem~\ref{thm:weighted}, to bound all $w_i$. Then we can derive the following main result directly from Theorem~\ref{thm:weighted}.

\begin{thm}[Optimality of Algorithm~\ref{alg:bcd_expectile}]
    Suppose $\omega \leq 1/2$. Assume that the linear mapping $\mathcal{A}(\cdot)$ satisfies $2k$-RIP condition with $\delta_{2k} \leq C_3/k \cdot (1-\omega)^2/\omega^2$ for some small constant $C_3$, and assume that the singular values of $M^*$ are bounded in the range of $[\Sigma_{\min}, \Sigma_{\max}]$, where $\Sigma_{\min}$ and $\Sigma_{\max}$ are constants and do not scale with the matrix size.
    Then, given any desired precision $\varepsilon$, there exists a constant $C_4$ such that Algorithm~\ref{alg:bcd_expectile} satisfies $\norm{M^{(T)} - M^*}_F \leq \varepsilon$ for all $T \geq O(\log (C_4/\varepsilon) + \log(\omega/(1-\omega)) )$. If $\omega > 1/2$, we can get the same result by substituting $\omega$ with $1-\omega$.
    \label{thm:expectile_exact}
\end{thm}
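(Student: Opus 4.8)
The plan is to derive Theorem~\ref{thm:expectile_exact} as a direct specialization of Theorem~\ref{thm:weighted}; the only thing that needs checking is that Algorithm~\ref{alg:bcd_expectile} is an instance of the weighted alternating minimization scheme governed by Theorem~\ref{thm:weighted}. First I would put the EMF objective into weighted least squares form: using $\rho_\omega^{[2]}(t) = |\omega - \mathbbm{1}(t<0)|\,t^2$, at any iterate $F(X,Y) = \sum_{i=1}^p w_i (b_i - \innerprod{A_i, XY^{\sf T}})^2$ where $w_i = \omega$ if the residual $r_i = b_i - \innerprod{A_i, XY^{\sf T}} \ge 0$ and $w_i = 1-\omega$ if $r_i < 0$. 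Next I would argue that each half-step of the algorithm (Steps~5 and~7) is exactly an alternating-minimization update for this $\mathcal F$: since $\rho_\omega^{[2]}$ is $C^1$ with derivative $2\,w(t)\,t$, the stationarity condition for $\min_Y F(\bar X, Y)$ coincides with that of the weighted-least-squares problem whose weights are frozen at the sign pattern of the solution's residuals, and — given the orthonormal $\bar X$ and the strong convexity recorded in the appendix — that solution is the unique fixed point, which the QP \eqref{prob:sub_expectile} merely computes.

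The crucial point, and essentially the only one requiring care, is that the weight vector is \emph{re-chosen at every half-step} from the current residual signs rather than being fixed in advance. Theorem~\ref{thm:weighted} is stated precisely to accommodate this: it only assumes $w_i \in [w_-, w_+]$ in each iteration. So, taking $\omega \le 1/2$ and setting $w_- = \omega$, $w_+ = 1-\omega$, one has $w_- + w_+ = 1$ and $0 < w_- \le 1/2 \le w_+ < 1$, and every iteration's weights lie in $\{\omega, 1-\omega\}^p \subseteq [w_-, w_+]^p$. The remaining hypotheses — the $2k$-RIP property of $\mathcal A$, the bounds $[\Sigma_{\min}, \Sigma_{\max}]$ on the singular values of $M^*$, and the SVD initialization of Step~3 — are shared by the algorithm and the theorem, and the quantities $w_-^2/w_+^2$ and $w_-/w_+$ become $\omega^2/(1-\omega)^2$ and $\omega/(1-\omega)$, matching the assumed bound on $\delta_{2k}$ and the $\log(\omega/(1-\omega))$ term in the iteration count (with $C_3, C_4$ the constants $C_1, C_2$ of Theorem~\ref{thm:weighted} renamed). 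Theorem~\ref{thm:weighted} then yields $\norm{M^{(T)} - M^*}_F \le \varepsilon$ for all $T \ge O(\log(C_4/\varepsilon) + \log(\omega/(1-\omega)))$.

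Two small loose ends remain. First, in the noiseless case ``global optimum'' equals ``$M^*$'': the factorization of $M^*$ gives $F = 0$, which is minimal since $\rho_\omega^{[2]} \ge 0$, and any other minimizer $\hat M$ has all residuals zero, i.e. $\mathcal A(\hat M - M^*) = 0$, so $2k$-RIP applied to the rank-$\le 2k$ matrix $\hat M - M^*$ forces $\hat M = M^*$; hence convergence to a global optimum is exactly recovery of $M^*$. Second, for $\omega > 1/2$ I would observe that $\rho_\omega^{[2]}$ is $\rho_{1-\omega}^{[2]}$ with the roles of positive and negative residuals interchanged, so the same argument with $w_- = 1-\omega$, $w_+ = \omega$ gives the identical conclusion with $\omega$ replaced by $1-\omega$. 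In short, there is no substantive obstacle at this level — all the difficulty (the RIP-based geometric contraction and the handling of iteration-varying weights) lives inside Theorem~\ref{thm:weighted}; the present proof is bookkeeping plus the observation that EMF's sign-dependent weights always lie in the fixed interval $[\min(\omega,1-\omega), \max(\omega,1-\omega)]$.
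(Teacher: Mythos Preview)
Your proposal is correct and follows the same approach as the paper: the paper's own argument is simply the paragraph preceding the theorem, which observes that the per-iteration weights in Algorithm~\ref{alg:bcd_expectile} always lie in $\{\omega,1-\omega\}\subseteq[w_-,w_+]$ with $w_-=\min(\omega,1-\omega)$, $w_+=\max(\omega,1-\omega)$, and then invokes Theorem~\ref{thm:weighted}. You are in fact more explicit than the paper on the one point that deserves care---namely, that minimizing the nonquadratic $F(\bar X,\cdot)$ produces the same $Y^{(t+0.5)}$ as minimizing the weighted least-squares objective with weights frozen at the sign pattern of that minimizer's residuals, which is exactly what the proof of Theorem~\ref{thm:weighted} uses.
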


Additionally, 
the number of observations needed for exact recovery is
$p = \Omega\big(\frac{(1-\omega)^2}{\omega^2} k^3 n \log n\big),$
if the entries of $A_i$ are independently drawn from a sub-Gaussian distribution with zero mean and unit variance,
since we require $\delta_{2k} \leq C/k \cdot (1-\omega)^2/\omega^2$. This also matches the sampling complexity of conventional matrix factorization \cite{jain2013low}. 

\begin{figure}
    \centering
    \subfigure[Response times]{
        \includegraphics[width=1.5in]{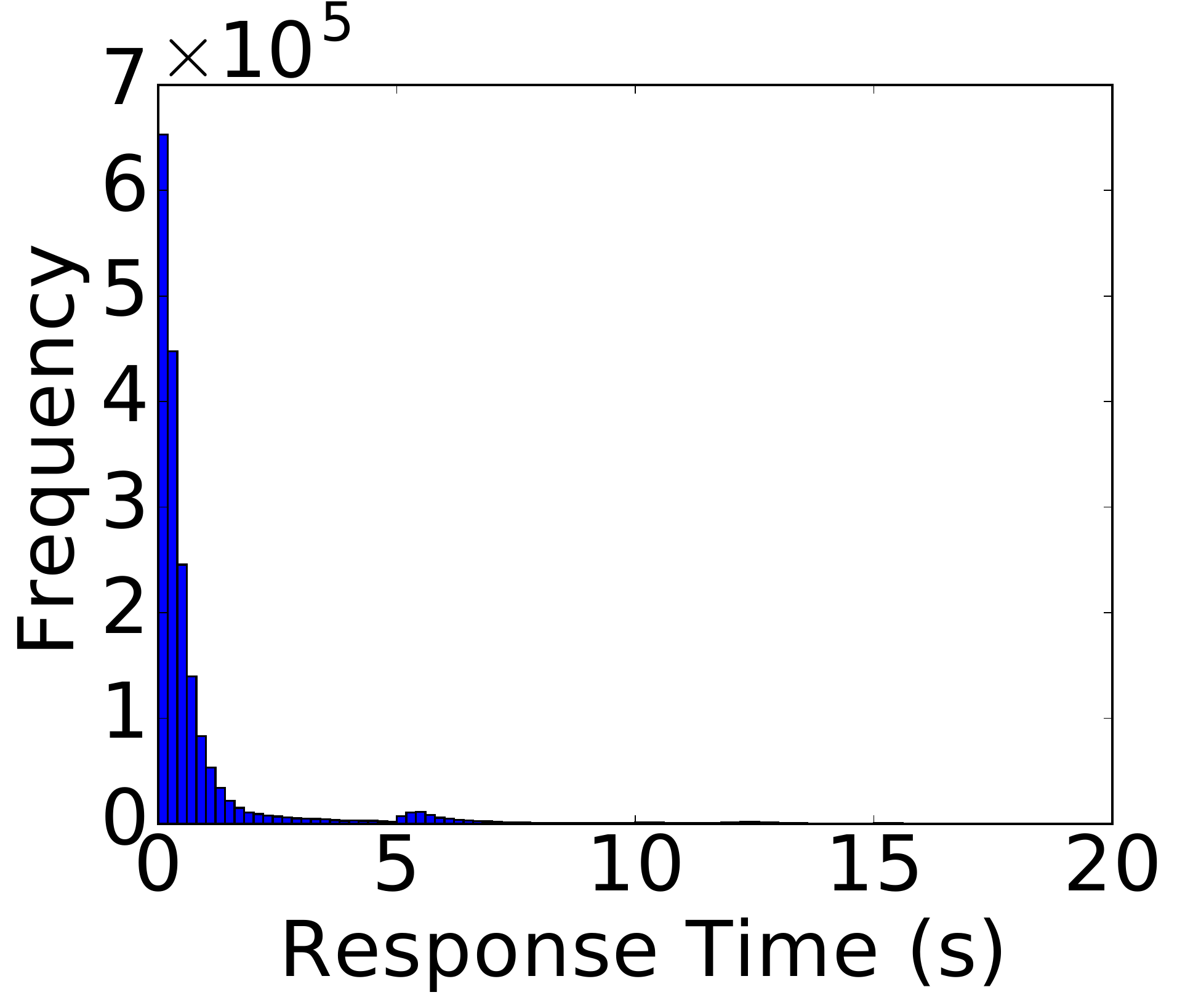}
        \label{fig:rt_stat}
    }
    \hspace{-2mm}
    \subfigure[Residuals from MSE-MF]{
        \includegraphics[width=1.6in]{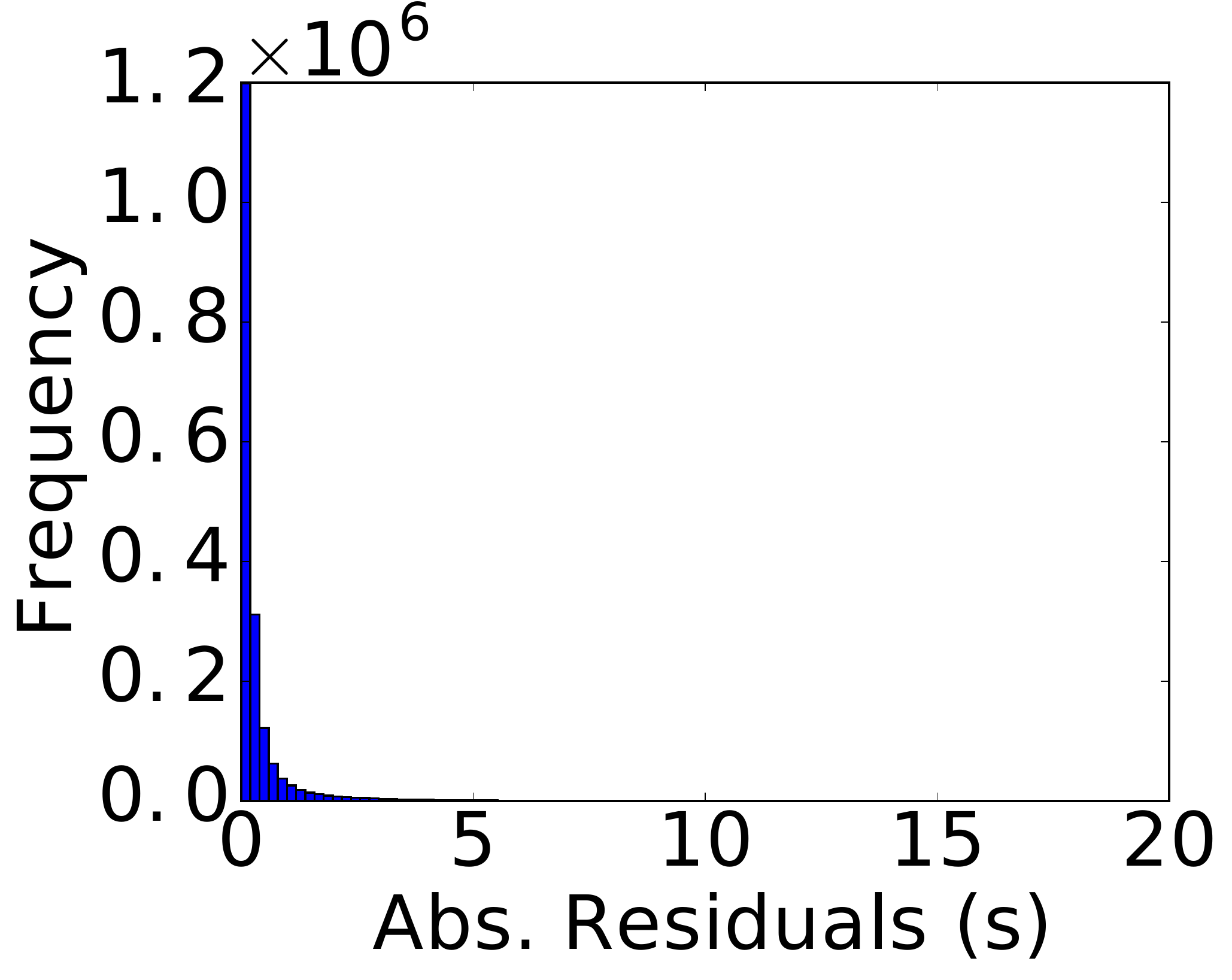}
        \label{fig:rt_residual}
    }
    \caption{Histograms of a) response times between 5825 web services and 339 service users; b) the residuals of estimates from MSE-based matrix factorization applied on the complete matrix.}
\end{figure}

\begin{figure*}[!htb]
    \centering
    \subfigure[Sampling rate $R=0.05$]{
        \includegraphics[width=2.7in]{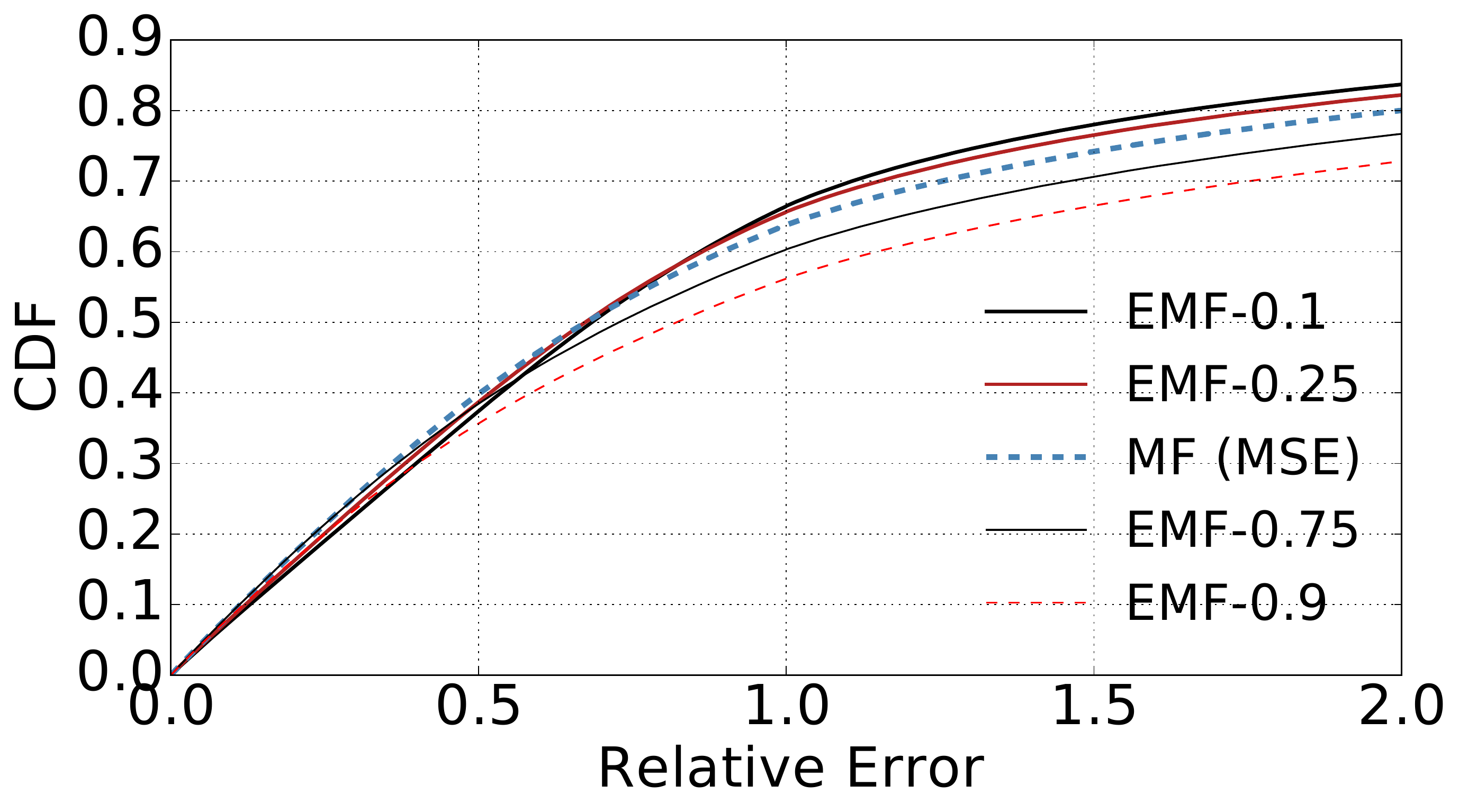}
        \label{fig:rt05}
    }
    \subfigure[Sampling rate $R=0.1$]{
      \includegraphics[width=2.7in]{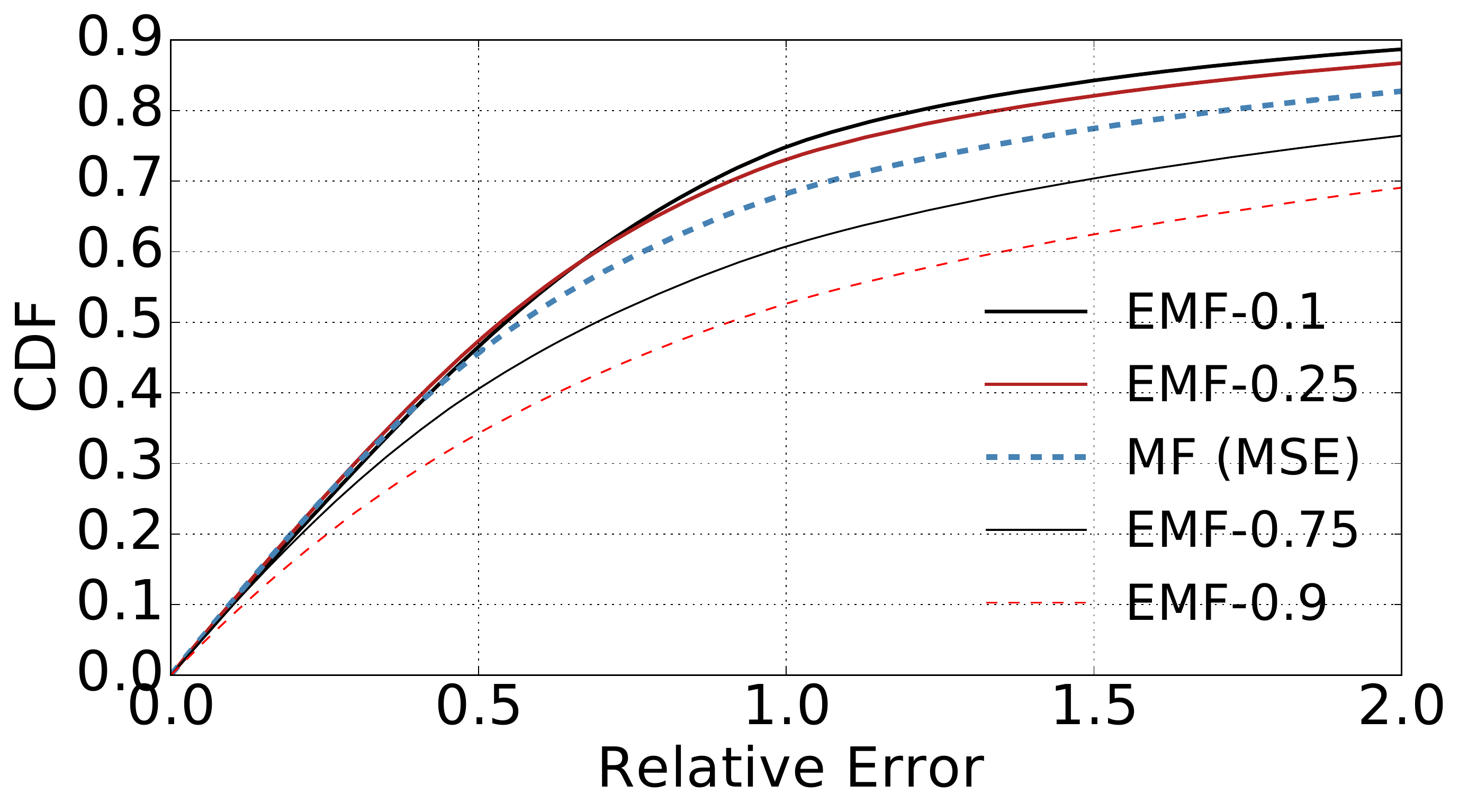}
      \label{fig:rt10}
    }
    \caption{CDF of relative errors via expectile matrix factorization for web service response time estimation under different sampling rates and $\omega$.}
    \label{fig:rt_cdf}
\end{figure*}

\section{Experiments}
\label{sec:simu}

In this section, we evaluate the performance of EMF in comparison to the state-of-the-art MSE-based matrix factorization based on both skewed synthetic data and a real-world dataset containing web service response times between 339 users and 5825 web services collected worldwide \cite{zheng2014investigating}. In both tasks, we aim to estimate a true matrix $M^*$ based on partial observations.
We define the relative error (RE) as $|M^*_{i,j} - \hat{M}_{i,j}| / M^*_{i,j}$ for all the missing entries $(i,j)$. We use RE to evaluate the prediction accuracy of different methods under a certain sampling rate $R$ (the fraction of known entries).


\subsection{Experiments on Skewed Synthetic Data}
We randomly generate a $1000\times 1000$ matrix $M^* = X Y^{\sf T}$ of rank $k=10$, where $X \in \real{m \times k}$ and $Y\in \real{n \times k}$ have independent and uniformly distributed entries in $[0,1]$.
Then, we contaminate $M^*$ by a skewed noise matrix $0.5N$, where $N$ contains independent \emph{Chi-square} entries with 3 degrees of freedom. The 0.5 is to make sure the noise does not dominate. We observe some elements in the contaminated matrix and aim to recover the underlying true low-rank $M^*$ under two sampling rates $R=0.05$ and $R=0.1$, respectively, where $R$ is the fraction of elements observed. The experiment is repeated for 10 times for each $R$.
We plot the CDF of relative errors in terms of recovering the missing elements of $M^*$ in Fig.~\ref{fig:compareMC}. We can see that expectile matrix factorization outperforms the conventional MSE-based algorithm (EMF with $\omega=0.5$) in terms of recovery from skewed noise, with $\omega=0.1$ yielding the best performance, under both $R=0.05$ and $R=0.1$. When more observations are available with $R=0.1$, EMF with $\omega=0.1$ demonstrates more benefit as it is more robust to the heavy-tailed noise in data.



\begin{figure*}[!htb]
  \subfigure[EMF--0.1]{
	\includegraphics[width=1.35in]{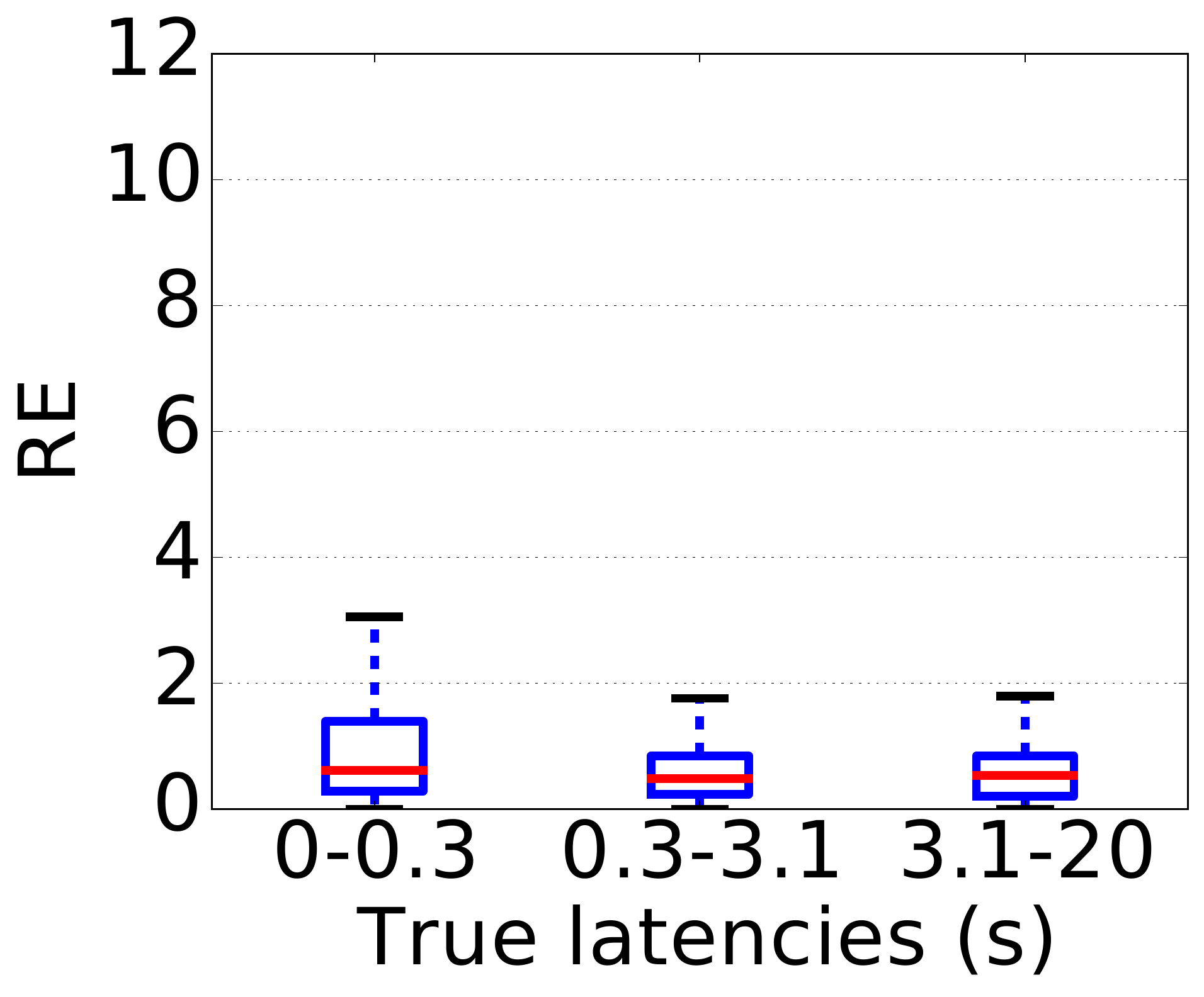}
	\label{fig:boxplot10}
  }
  \hspace{-3mm}
  \subfigure[EMF--0.25]{
    \includegraphics[width=1.35in]{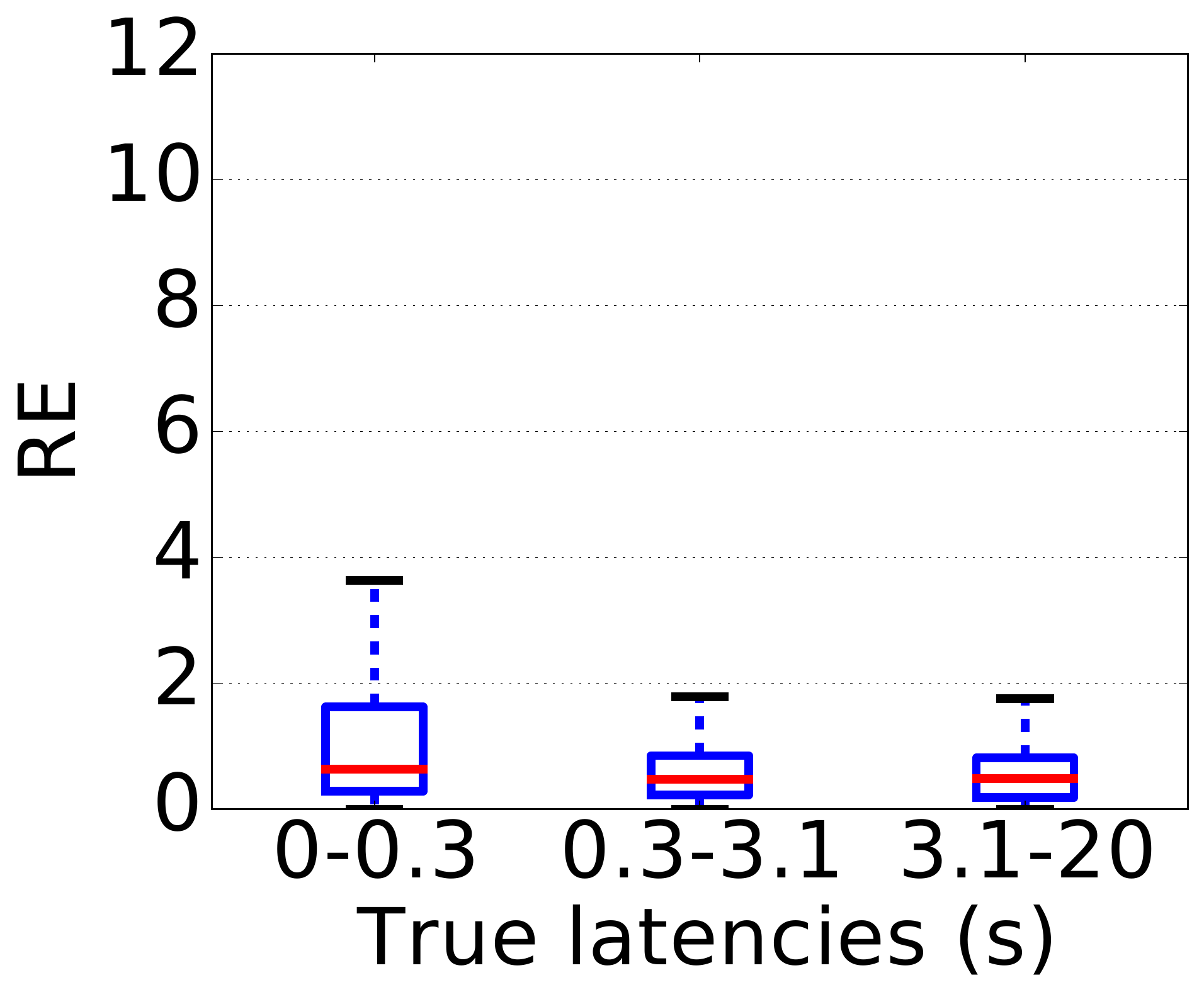}
    \label{fig:boxplot25}
  }
  \hspace{-3mm}
  \subfigure[MF--MSE (EMF--0.5)]{
    \includegraphics[width=1.35in]{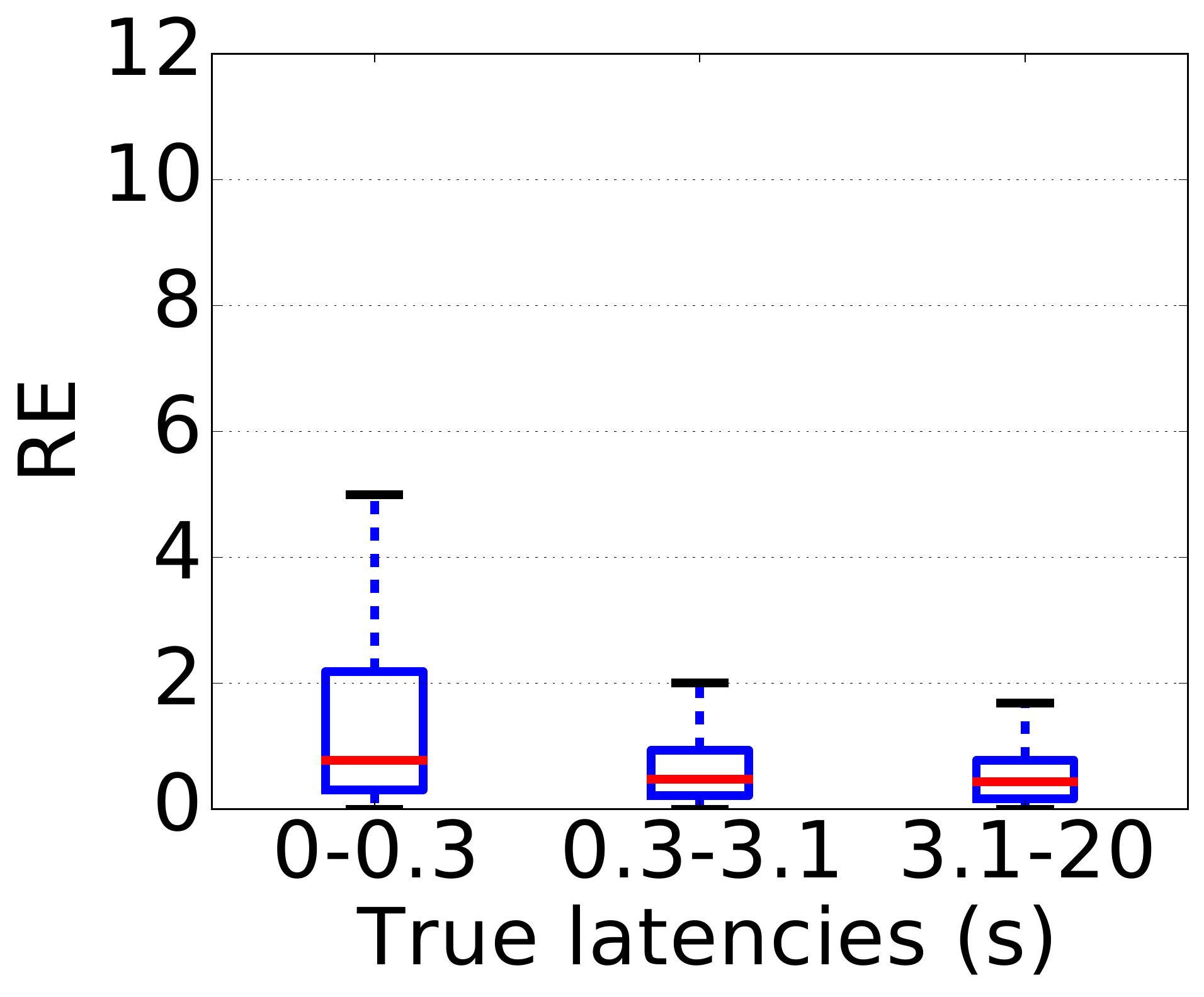}
    \label{fig:boxplot50}
  }
  \hspace{-3mm}
  \subfigure[EMF--0.75]{
    \includegraphics[width=1.35in]{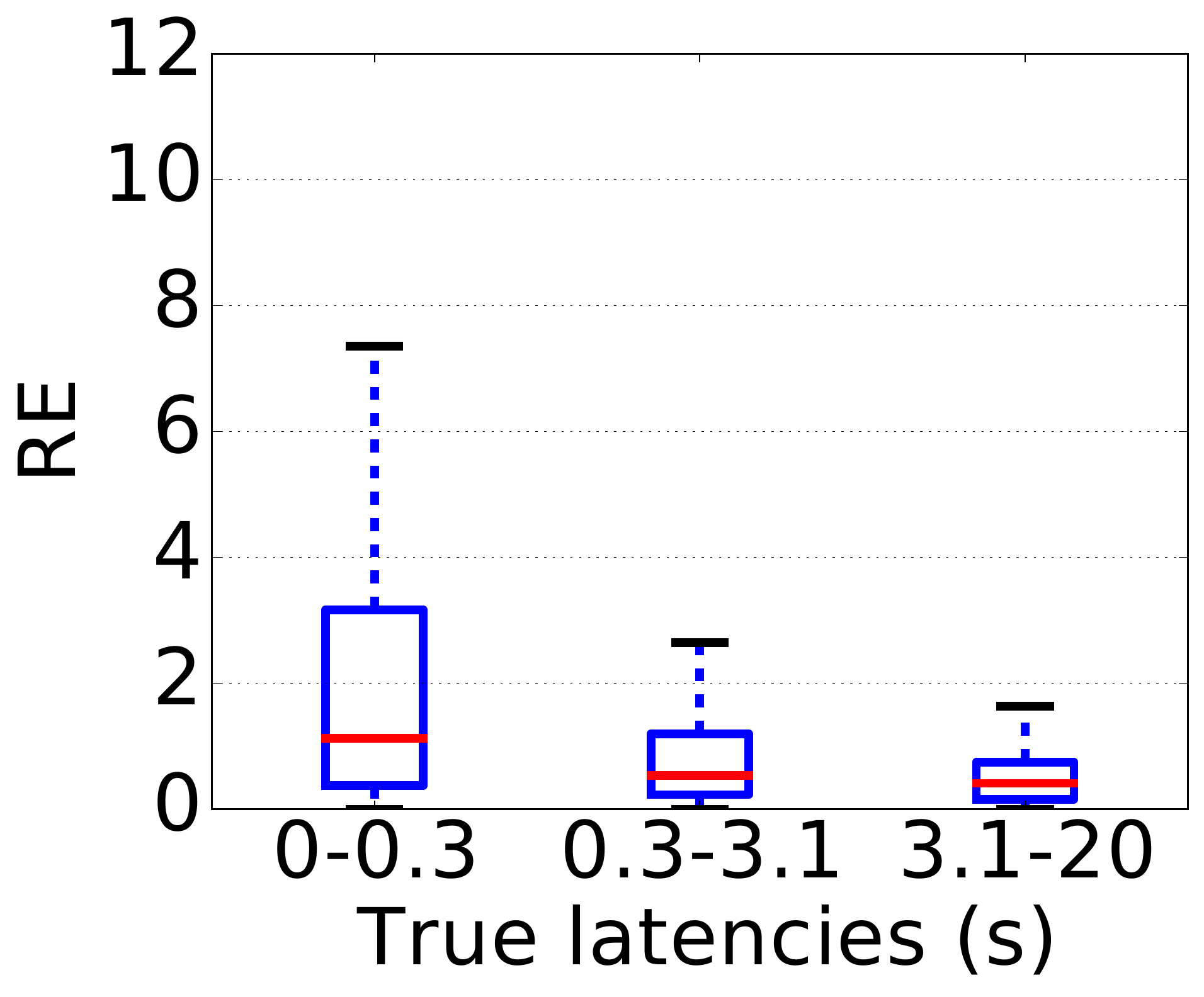}
    \label{fig:boxplot75}
  }
  \hspace{-3mm}
  \subfigure[EMF--0.9]{
    \includegraphics[width=1.35in]{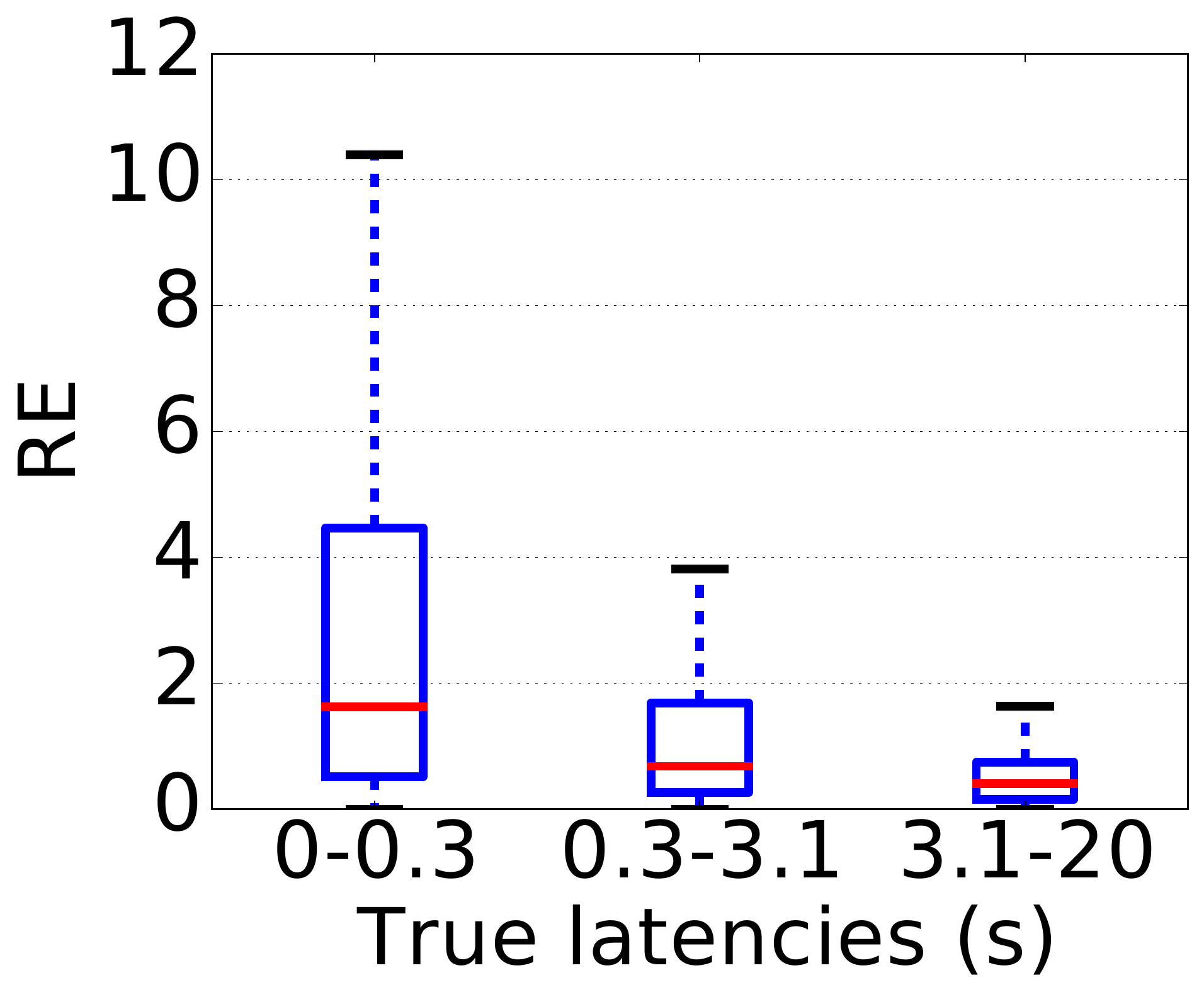}
    \label{fig:boxplot90}
  }
  \caption{Box plots of relative errors for different bins of true latencies in the test sets.}
  \label{fig:boxplotRE}
\end{figure*}

\begin{figure}
    \centering
    \subfigure[Median of REs]{
  	\includegraphics[width=1.65in]{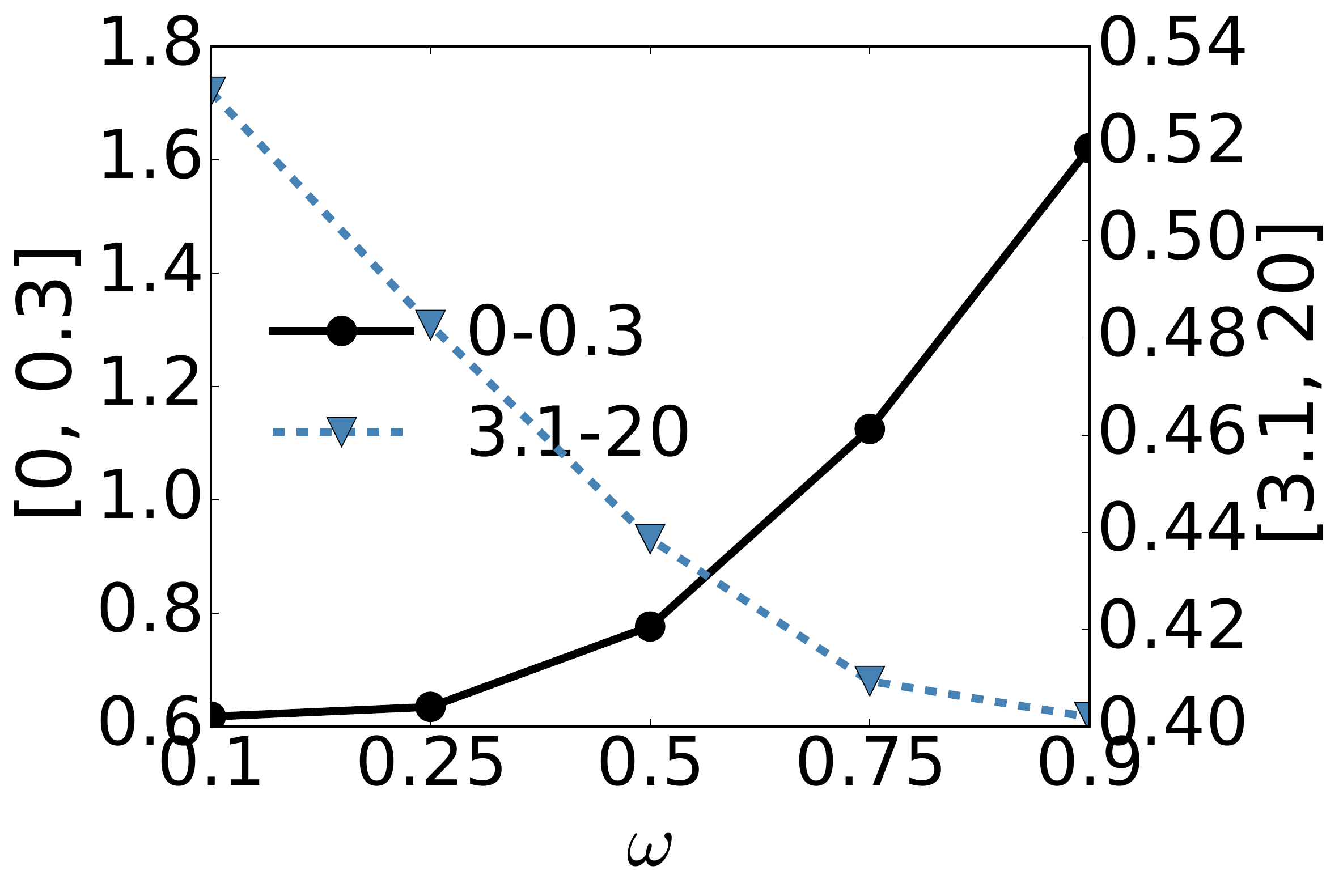}
  	\label{fig:median}
    }
    \hspace{-3mm}
    \subfigure[IQR of REs]{
      \includegraphics[width=1.55in]{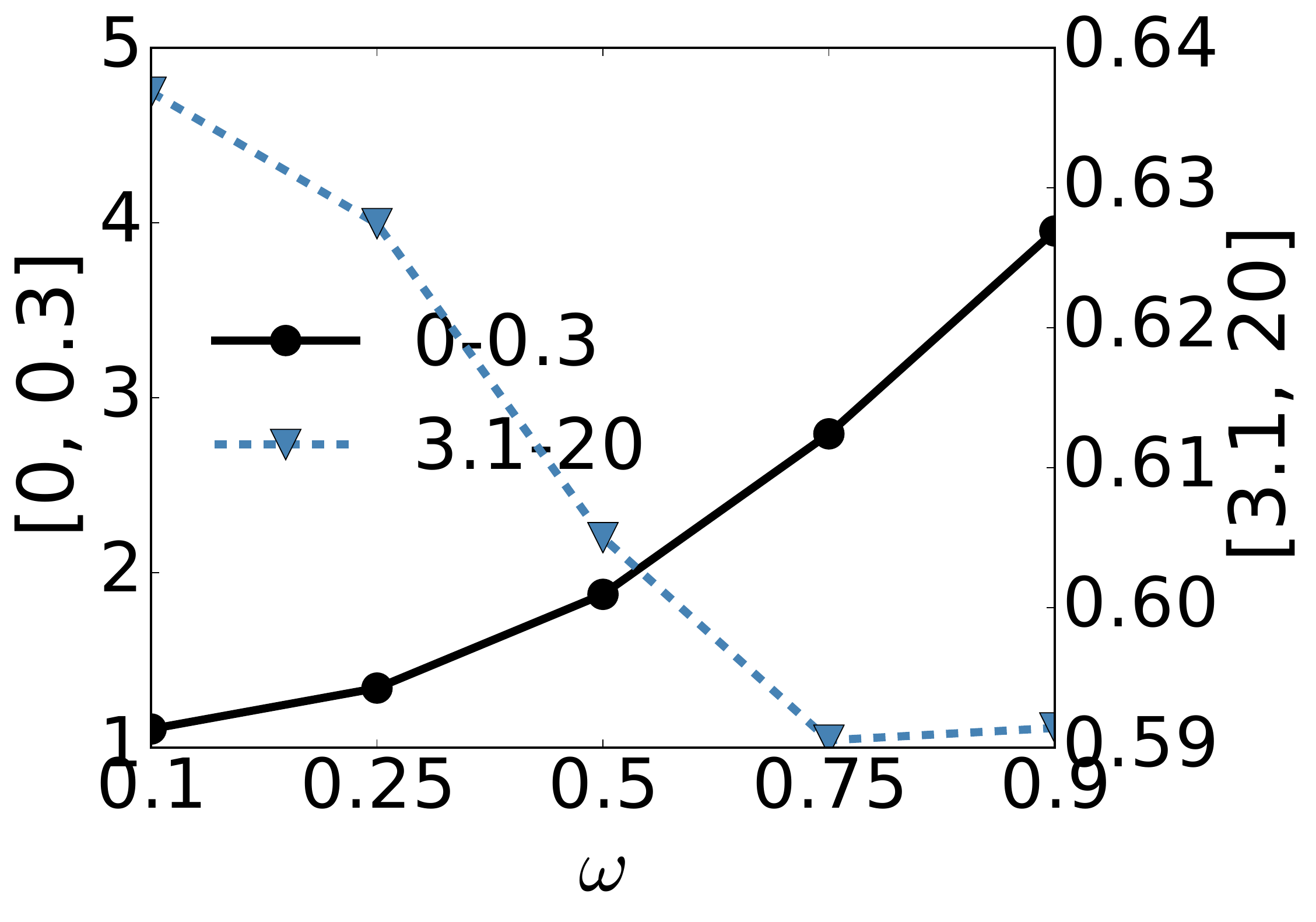}
      \label{fig:iqr}
    }
    \caption{The medians and IQRs of relative errors for different bins as $\omega$ varies.}
    \label{fig:median_iqr}
\end{figure}

\subsection{Experiments on Web Service Latency Estimation}

In these experiments, we aim to recover the web service response times  between 339 users and 5825 web services \cite{zheng2014investigating} distributed worldwide, under different sampling rates.

Fig.~\ref{fig:rt_stat} shows the histogram of all the response times measured between 339 users and 5825 web services. While most entries are less than 1 second, some response times may be as high as 20 seconds due to network delay variations, software glitches and even temporary service outages. 
The mean latency is 0.91 second, whereas the median is only 0.32 second. This implies that the mean is heavily impacted by the few tail values, while the 0.1-th expectile, which is 0.3 second, is closer to the median of the data. Therefore, if we use the conventional MSE-based matrix factorization to recover this skewed data, the result can be far away from the central area, while EMF with $\omega=0.1$ may better explain the central tendency. 

We further performed the MSE-based matrix factorization for the complete response time matrix, which boils down to  singular value decomposition (SVD) and we plot the residual histogram in Fig.~\ref{fig:rt_residual}. In this figure, 90\% of residuals are less than 0.8, while the largest residual can be up to 19.73. Since the residuals are still highly skewed, the conditional means do not serve as good estimates for the most probable data.

In Fig.~\ref{fig:rt_cdf}, we plot the relative errors of recovering missing response times with EMF under different $\omega$. Note that EMF-0.5 is essentially the conventional MSE-based matrix factorization. In Fig.~\ref{fig:rt_cdf}, we can see that EMF-0.1 performs the best under both sampling rates, and EMF-0.9 performs the worst, because the 0.1-th expectile is the closest to the median, while both the mean and 0.9-th expectile are far away from the central area of data distribution.

To take a closer look at the performance EMF on different segments of data, we divide the testing response times into three bins: 0-0.3s containing 47.5\% of all entries, 0.3-3.1s containing 45.4\% of all entries, and 3.1-20s containing only 7.1\% of all entries.
We show the relative errors for testing samples from different bins in box plots in Fig.~\ref{fig:boxplotRE} under different $\omega$.
In addition, in Fig.~\ref{fig:median_iqr}, we plot the median of REs and the interquartile range (IQR, the gap between the upper and lower quartiles) of REs  when $R=0.1$, as $\omega$ varies for the lower latency bin and the higher latency bin, respectively.

We can observe that EMF with a lower $\omega$ achieves higher accuracy in the lower range 0-0.3s, while EMF with a higher $\omega$ can predict better in the higher end 3.1-20s. This observation conforms to the intuition illustrated in Fig.~\ref{fig:intuition}: an $\omega<0.5$ penalizes negative residuals, pushing the estimates to be more accurate on the lower end, where most data are centered around. From Fig.~\ref{fig:median} and Fig.~\ref{fig:iqr}, we can see that EMF-0.1 predicts the best for the lower range, while EMF-0.9 performs the best for the higher range. However, since most data are distributed in the lower range, EMF-0.1 is better at predicting the central tendency and achieves the best overall accuracy.

\section{Concluding Remarks}
\label{sec:conclude}

In this paper, we propose the expectile matrix factorization approach (EMF) which introduces the ``asymmetric least squares'' loss function of expectile regression analysis originated in statistics and econometrics into matrix factorization for robust matrix estimation. Existing matrix factorization techniques aim at minimizing the mean squared error and essentially estimate the conditional means of matrix entries. 
In contrast, the proposed EMF can yield the $\omega$th conditional expectile estimates of matrix entries for any $\omega \in (0,1)$, accommodating the conventional matrix factorization as a special case of $\omega = 0.5$.
We propose an efficient alternating minimization algorithm to solve EMF and theoretically prove its convergence to the global optimality in the noiseless case.
Through evaluation based on both synthetic data and a dataset containing real-world web service response times, we show that EMF achieves better recovery than conventional matrix factorization when the data is skewed or contaminated by skewed noise. By using a flexible $\omega$, EMF is not only more robust to outliers but can also be tuned to obtain a more comprehensive understanding of data distribution in a matrix, depending on application requirements.

\bibliographystyle{aaai}
\bibliography{main}

\onecolumn


\section*{Preliminaries}
\begin{lem}[Lemma B.1 of \cite{jain2013low}]
    Suppose $\mathcal{A}(\cdot)$ satisfies $2k$-RIP. For any $X, U \in \real{m \times k}$ and $Y, V \in \real{n \times k}$, we have
    \[ \vert \innerprod{\mathcal{A}(XY^{\sf T}), \mathcal{A}(UV^{\sf T})} - \innerprod{X^{\sf T}U, Y^{\sf T}V} \vert \leq 3\delta_{2k}\norm{XY^{\sf T}}_F \cdot \norm{UV^{\sf T}}_F \]
    \label{lem:dec1-0}
\end{lem}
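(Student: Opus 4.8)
The plan is to deduce the inequality from the $2k$-RIP condition via a polarization identity. First I would record the elementary trace identity
\[ \innerprod{X^{\sf T}U, Y^{\sf T}V} = \tr\!\big(U^{\sf T}XY^{\sf T}V\big) = \tr\!\big((XY^{\sf T})^{\sf T}UV^{\sf T}\big) = \innerprod{XY^{\sf T}, UV^{\sf T}}, \]
which follows from cyclic invariance of the trace. So, writing $P := XY^{\sf T}$ and $Q := UV^{\sf T}$ (each of rank at most $k$), the claim reduces to bounding $\big|\innerprod{\mathcal{A}(P),\mathcal{A}(Q)} - \innerprod{P,Q}\big|$ by $3\delta_{2k}\norm{P}_F\norm{Q}_F$.

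Next I would use homogeneity: rescaling $X \mapsto cX$ (resp. $U \mapsto cU$) multiplies both sides of the target inequality by $|c|$, so I may assume $\norm{P}_F = \norm{Q}_F = 1$, the case $P = 0$ or $Q = 0$ being trivial since then both sides vanish. Under this normalization, applying the polarization identity on $\real{p}$ together with linearity of $\mathcal{A}$ gives
\[ \innerprod{\mathcal{A}(P),\mathcal{A}(Q)} = \tfrac14\Big(\norm{\mathcal{A}(P+Q)}_2^2 - \norm{\mathcal{A}(P-Q)}_2^2\Big). \]
Since $\rank(P \pm Q) \le 2k$, the $2k$-RIP assumption sandwiches $\norm{\mathcal{A}(P\pm Q)}_2^2$ between $(1-\delta_{2k})\norm{P\pm Q}_F^2$ and $(1+\delta_{2k})\norm{P\pm Q}_F^2$. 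Substituting the upper bound for the $P+Q$ term and the lower bound for the $P-Q$ term (and vice versa for the opposite direction of the estimate), the ``noiseless'' part recombines via the reverse polarization into $\tfrac14(\norm{P+Q}_F^2 - \norm{P-Q}_F^2) = \innerprod{P,Q}$, while the remainder is at most $\tfrac{\delta_{2k}}{4}\big(\norm{P+Q}_F^2 + \norm{P-Q}_F^2\big) = \tfrac{\delta_{2k}}{2}\big(\norm{P}_F^2 + \norm{Q}_F^2\big) = \delta_{2k}$ by the parallelogram law. Undoing the normalization restores the factor $\norm{P}_F\norm{Q}_F$, and since $\delta_{2k} \le 3\delta_{2k}$ the claimed bound follows (with room to spare; the constant $3$ is simply inherited from the cited source).

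I do not expect a genuine obstacle here, since this is a standard polarization trick, but the two points that need attention are: (i) checking that $P \pm Q$ indeed has rank at most $2k$, which is precisely what makes $2k$-RIP (rather than a $k$-RIP) the right hypothesis; and (ii) carrying the parallelogram identity through the error term carefully so that the cross-terms cancel exactly and only the $\norm{P}_F^2 + \norm{Q}_F^2$ piece survives. Everything else is routine bookkeeping.
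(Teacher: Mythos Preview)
Your argument is correct; in fact the polarization trick you outline yields the sharper constant $\delta_{2k}$ rather than $3\delta_{2k}$, so the stated bound follows a fortiori. Note, however, that the paper does not supply its own proof of this lemma: it is quoted verbatim from \cite{jain2013low} as a preliminary and used as a black box, so there is no in-paper argument to compare against. Your polarization approach is exactly the standard one used in the RIP literature for this type of estimate.
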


\begin{lem}[Lemma 2.1 of \cite{jain2010guaranteed}]
    Let $b = \mathcal{A}(M^*) + \varepsilon$, where $M^*$ is a matrix with the rank of $k$, $\mathcal{A}$ is the linear mapping operator satisfies $2k$-RIP with constant $\delta_{2k} < 1/3$, and $\varepsilon$ is a bounded error vector. Let $M^{(t+1)}$ be the $t+1$-th step iteration of SVP, then we have
    \[\norm{\mathcal{A}(M^{(t+1)}) - b}_2^2 \leq \norm{ \mathcal{A}(M^*) - b}_2^2 + 2\delta_{2k} \norm{\mathcal{A}(M^{(t)}) - b}_2^2. \]
    \label{lem:rip_svp}
\end{lem}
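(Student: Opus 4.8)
The plan is to prove this one-step descent guarantee for Singular Value Projection (SVP) by a majorization--minimization argument driven entirely by the two-sided $2k$-RIP. Recall that an SVP step takes a gradient step on $f(M):=\tfrac12\norm{\mathcal{A}(M)-b}_2^2$ from $M^{(t)}$ with step size $\eta=1/(1+\delta_{2k})$ and projects the result onto the rank-$\le k$ matrices by truncated SVD, i.e.\ $M^{(t+1)}=P_k\bigl(M^{(t)}-\eta\,\nabla f(M^{(t)})\bigr)$ with $\nabla f(M)=\mathcal{A}^{*}(\mathcal{A}(M)-b)$. The upper RIP bound will furnish a quadratic majorizer of $f$ whose rank-constrained minimizer is exactly this step, and the lower RIP bound will then let me trade the quadratic penalty for the residual $\norm{\mathcal{A}(M^{(t)})-b}_2^2$.

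First I would record the exact second-order expansion of the quadratic $f$: for any $M,M'$, $f(M')=f(M)+\innerprod{\nabla f(M),\,M'-M}+\tfrac12\norm{\mathcal{A}(M'-M)}_2^2$. Since any rank-$\le k$ matrix $M$ satisfies $\rank(M-M^{(t)})\le 2k$, the upper RIP bound gives $\tfrac12\norm{\mathcal{A}(M-M^{(t)})}_2^2\le\tfrac{1+\delta_{2k}}{2}\norm{M-M^{(t)}}_F^2$, hence $f(M)\le g_t(M):=f(M^{(t)})+\innerprod{\nabla f(M^{(t)}),\,M-M^{(t)}}+\tfrac{1+\delta_{2k}}{2}\norm{M-M^{(t)}}_F^2$. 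Completing the square rewrites $g_t(M)=\tfrac{1+\delta_{2k}}{2}\bigl\|M-\bigl(M^{(t)}-\eta\,\nabla f(M^{(t)})\bigr)\bigr\|_F^2+c_t$, so by Eckart--Young the minimizer of $g_t$ over rank-$\le k$ matrices is precisely $M^{(t+1)}$. Because $\rank(M^*)=k$, I obtain the chain $f(M^{(t+1)})\le g_t(M^{(t+1)})\le g_t(M^*)$.

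Next I would evaluate $g_t(M^*)$ using the exact expansion of $f$ between $M^{(t)}$ and $M^*$, which turns $f(M^{(t)})+\innerprod{\nabla f(M^{(t)}),\,M^*-M^{(t)}}$ into $f(M^*)-\tfrac12\norm{\mathcal{A}(M^*-M^{(t)})}_2^2$, so $g_t(M^*)=f(M^*)-\tfrac12\norm{\mathcal{A}(M^*-M^{(t)})}_2^2+\tfrac{1+\delta_{2k}}{2}\norm{M^*-M^{(t)}}_F^2$. Since $\rank(M^*-M^{(t)})\le 2k$, the lower RIP bound $\norm{\mathcal{A}(M^*-M^{(t)})}_2^2\ge(1-\delta_{2k})\norm{M^*-M^{(t)}}_F^2$ collapses the last two terms to at most $\delta_{2k}\norm{M^*-M^{(t)}}_F^2$, giving $f(M^{(t+1)})\le f(M^*)+\delta_{2k}\norm{M^*-M^{(t)}}_F^2$, that is $\norm{\mathcal{A}(M^{(t+1)})-b}_2^2\le\norm{\mathcal{A}(M^*)-b}_2^2+2\delta_{2k}\norm{M^*-M^{(t)}}_F^2$. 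The final step is to replace $\norm{M^*-M^{(t)}}_F^2$ by the fitting error: once more $\rank(M^*-M^{(t)})\le 2k$, so RIP gives $\norm{M^*-M^{(t)}}_F^2\le(1-\delta_{2k})^{-1}\norm{\mathcal{A}(M^*-M^{(t)})}_2^2$, and $\mathcal{A}(M^*-M^{(t)})=(\mathcal{A}(M^*)-b)-(\mathcal{A}(M^{(t)})-b)$; substituting and cleaning up the resulting constants under $\delta_{2k}<1/3$ yields $\norm{\mathcal{A}(M^{(t+1)})-b}_2^2\le\norm{\mathcal{A}(M^*)-b}_2^2+2\delta_{2k}\norm{\mathcal{A}(M^{(t)})-b}_2^2$.

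The conceptual content is light --- really the single observation that an SVP iterate is the rank-constrained minimizer of an RIP-supplied quadratic upper bound --- so the real work, and the step I expect to be fussy, is the constant bookkeeping at the end together with verifying that every matrix handed to RIP genuinely has rank at most $2k$ (so the $2k$-RIP, not a $k$-RIP, is what is actually invoked each time). The crude last chain leaves a factor $\tfrac{2\delta_{2k}}{1-\delta_{2k}}$ and an extra multiple of $\norm{\varepsilon}_2^2=\norm{\mathcal{A}(M^*)-b}_2^2$; tightening to the clean stated form --- which is cleanest, and is all the paper ultimately needs, in the noiseless case $b=\mathcal{A}(M^*)$ where $\mathcal{A}(M^*-M^{(t)})$ equals $-(\mathcal{A}(M^{(t)})-b)$ exactly --- is precisely where the hypothesis $\delta_{2k}<1/3$ gets spent, so I would defer all such constant-chasing until after the structural inequalities are in place.
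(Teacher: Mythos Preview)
The paper does not prove this lemma at all: it is quoted verbatim from \cite{jain2010guaranteed} in the Preliminaries and used as a black box (only in the proof of Lemma~\ref{lem:initial}, and only in the noiseless case $\varepsilon=0$). So there is no ``paper's own proof'' to compare against.

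That said, your argument is the standard one from the source and is structurally correct: build the RIP-based quadratic majorizer $g_t$, observe that Eckart--Young makes $M^{(t+1)}$ its rank-$k$ minimizer, compare at $M^*$, and then use the lower RIP bound. Two remarks. First, you are right that the only place the constants get delicate is the very last substitution in the noisy case; your chain gives $\tfrac{2\delta_{2k}}{1-\delta_{2k}}\norm{\mathcal{A}(M^*-M^{(t)})}_2^2$ rather than $2\delta_{2k}\norm{\mathcal{A}(M^{(t)})-b}_2^2$, and closing that gap requires either absorbing an extra $\norm{\varepsilon}_2^2$ into the first term or quoting the slightly different formulation in \cite{jain2010guaranteed}. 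Second, this is moot for the present paper: the sole invocation here takes $\varepsilon=0$, where $\mathcal{A}(M^*-M^{(t)})=-(\mathcal{A}(M^{(t)})-b)$ exactly and your bound already yields
\[
\norm{\mathcal{A}(M^{(t+1)})-b}_2^2 \le \frac{2\delta_{2k}}{1-\delta_{2k}}\,\norm{\mathcal{A}(M^{(t)})-b}_2^2 \le 4\delta_{2k}\,\norm{\mathcal{A}(M^{(t)})-b}_2^2,
\]
which is precisely what the proof of Lemma~\ref{lem:initial} uses.
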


\begin{lem}[Lemma 4.5 of \cite{zhao2015nonconvex}]
    Suppose that $Y^{(t+0.5)}$ in Alg.~\ref{alg:bcd_expectile} satisfies $\norm{Y^{(t+0.5)} - V^{(t)}}_F \leq \sigma_k/4$. Then, there exists a factorization of matrix $M^* = U^{(t+1)}\bar{V}^{(t+1)\sf T}$ such that $V^{(t+1)} \in \real{n \times k}$ is an orthonomal matrix, and satisfies \[\norm{\bar{Y}^{(t+1)} - \bar{V}^{(t+1)}}_F \leq 2/\sigma_k \cdot \norm{Y^{(t+0.5)} - V^{(t)}}_F. \]
    \label{lem:qr_dec}
\end{lem}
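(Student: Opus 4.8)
\emph{Proof sketch (plan).} The statement is purely linear-algebraic, so the plan is to set aside the RIP machinery and argue directly with QR factors and principal angles. Abbreviate $Y:=Y^{(t+0.5)}$, $V:=V^{(t)}$, $\bar Y:=\bar Y^{(t+1)}$, $\Delta:=\norm{Y-V}_F\le\sigma_k/4$, where $\sigma_k=\sigma_{\min}(M^*)$. From the construction of the target sequence in the underlying analysis, $V$ is the right factor of $M^*$ in a factorization whose left factor is orthonormal; hence $V$ has full column rank $k$, $\mathrm{col}(V)=\mathrm{row}(M^*)$, and $V$ carries the singular values of $M^*$, so $\sigma_{\min}(V)=\sigma_k$. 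I take these three facts as given. First I would note the iterate is well conditioned: Weyl's inequality gives $\sigma_{\min}(Y)\ge\sigma_{\min}(V)-\norm{Y-V}_2\ge\sigma_k-\Delta\ge\tfrac{3}{4}\sigma_k>0$, so $Y$ has full column rank and $\bar Y$ is a genuine orthonormal basis of $\mathrm{col}(Y)$; let $P:=\bar Y\bar Y^{\sf T}=Y(Y^{\sf T}Y)^{-1}Y^{\sf T}$ and $Q:=V(V^{\sf T}V)^{-1}V^{\sf T}$ be the orthogonal projectors onto $\mathrm{col}(Y)$ and $\mathrm{col}(V)$.

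Next I would bound the subspace distance $\norm{P-Q}_F$. Writing $P-Q=P(I-Q)-(I-P)Q$ and observing the two summands are Frobenius-orthogonal (since $P(I-P)=0$ forces $\langle P(I-Q),(I-P)Q\rangle_F=0$), one gets $\norm{P-Q}_F^2=\norm{P(I-Q)}_F^2+\norm{(I-P)Q}_F^2$. Because $(I-Q)V=0$, $(I-Q)Y=(I-Q)(Y-V)$, so $\norm{P(I-Q)}_F=\norm{(I-Q)Y\,(Y^{\sf T}Y)^{-1}Y^{\sf T}}_F\le\norm{Y-V}_F/\sigma_{\min}(Y)\le\tfrac{4}{3\sigma_k}\Delta$; symmetrically $(I-P)Y=0$ gives $\norm{(I-P)Q}_F\le\norm{Y-V}_F/\sigma_{\min}(V)\le\tfrac{1}{\sigma_k}\Delta$. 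Hence $\norm{P-Q}_F\le\tfrac{1}{\sigma_k}\sqrt{(4/3)^2+1}\;\Delta=\tfrac{5}{3\sigma_k}\Delta$.

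Finally I would pass back from subspaces to factors, which is where the ``there exists a factorization'' clause is essential. I would take $\bar V^{(t+1)}$ to be the orthonormal matrix with $\mathrm{col}(\bar V^{(t+1)})=\mathrm{row}(M^*)$ that is closest to $\bar Y$ in Frobenius norm --- an orthogonal-Procrustes problem solved \emph{inside} the fixed subspace $\mathrm{row}(M^*)$ --- and set $U^{(t+1)}:=M^*\bar V^{(t+1)}$. Since $\bar V^{(t+1)}$ is orthonormal with column space $\mathrm{row}(M^*)$, one has $U^{(t+1)}\bar V^{(t+1)\sf T}=M^*\bar V^{(t+1)}\bar V^{(t+1)\sf T}=M^*$, a legitimate factorization of the required form. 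If $\theta_1,\dots,\theta_k$ are the principal angles between $\mathrm{col}(\bar Y)$ and $\mathrm{row}(M^*)$, then the Procrustes optimum equals $\norm{\bar Y-\bar V^{(t+1)}}_F^2=2\sum_i(1-\cos\theta_i)$ while $\norm{P-Q}_F^2=2\sum_i\sin^2\theta_i$, and $1-\cos\theta_i\le\sin^2\theta_i$ because $\cos\theta_i\in[0,1]$; hence $\norm{\bar Y^{(t+1)}-\bar V^{(t+1)}}_F\le\norm{P-Q}_F\le\tfrac{5}{3\sigma_k}\Delta\le\tfrac{2}{\sigma_k}\Delta$, which is the claim.

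I expect the main obstacle to be this last step. The tempting move --- comparing $\bar Y^{(t+1)}$ directly with the QR factor of $V^{(t)}$ --- forces a messier Cholesky/$R$-factor perturbation estimate; using the existential freedom to align $\bar V^{(t+1)}$ to $\bar Y^{(t+1)}$ instead is cleaner, but then one must check the alignment can be performed within $\mathrm{row}(M^*)$ so that $U^{(t+1)}=M^*\bar V^{(t+1)}$ still reconstructs $M^*$ exactly. Everything else (Weyl's inequality, the projector identities, the Pythagorean split, and the von Neumann trace inequality behind the Procrustes formula) is routine, and the gap between the constant $5/3$ obtained above and the $2$ in the statement comfortably absorbs the conditioning factor $\sigma_{\min}(Y)^{-1}\le 4/(3\sigma_k)$ --- which is exactly where the hypothesis $\Delta\le\sigma_k/4$ is used.
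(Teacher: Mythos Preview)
The paper does not actually prove this lemma; it is listed in the Preliminaries section and attributed verbatim to Lemma~4.5 of \cite{zhao2015nonconvex}, then invoked as a black box in the proof of Lemma~\ref{lem:descent} and in the induction for Theorem~\ref{thm:weighted}. So there is no in-paper argument to compare against.

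Your argument is correct and self-contained. The chain --- Weyl's inequality for nondegeneracy of $Y$, the Pythagorean split $P-Q=P(I-Q)-(I-P)Q$ with $\langle P(I-Q),(I-P)Q\rangle_F=0$ via $P(I-P)=0$, the Wedin-type bound on each piece using $(I-Q)Y=(I-Q)(Y-V)$ together with $\|(Y^{\sf T}Y)^{-1}Y^{\sf T}\|_2=1/\sigma_{\min}(Y)$, and finally the principal-angle inequality $2\sum_i(1-\cos\theta_i)\le 2\sum_i\sin^2\theta_i=\|P-Q\|_F^2$ to pass from projector distance to an aligned-basis distance --- is the standard route and all steps check out. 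You correctly use the existential clause: choosing $\bar V^{(t+1)}$ by Procrustes alignment within $\mathrm{row}(M^*)$ and setting $U^{(t+1)}=M^*\bar V^{(t+1)}$ gives $U^{(t+1)}\bar V^{(t+1)\sf T}=M^*$ because $\bar V^{(t+1)}\bar V^{(t+1)\sf T}$ is the projector onto $\mathrm{row}(M^*)$. The constant $5/3$ you obtain is strictly better than the stated $2$, and you are right that the hypothesis $\Delta\le\sigma_k/4$ is used only to control $\sigma_{\min}(Y)^{-1}\le 4/(3\sigma_k)$.
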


\section*{Proofs}

\subsection{Roadmap}

The first step is to prove strongly convexity and smoothness of $\mathcal{F}(X, Y)$ if one variable is fixed by a orthonormal matrix as follows:
\begin{lem}
    Suppose that $\delta_{2k}$ and $\bar{X}^{(t)}$ satisfy
    \begin{equation}
        \delta_{2k} \leq \frac{\sqrt{2}w_-^2(1-\delta_{2k})^2\sigma_k}{24\xi w_+ k(1+\delta_{2k})\sigma_1}.
    \end{equation}
    and
    \begin{equation}
        \norm{\bar{X}^{(t)} - \bar{U}^{(t)}}_F \leq \frac{w_-(1-\delta_{2k})\sigma_k}{2\xi w_+(1+\delta_{2k})\sigma_1}
    \end{equation}
    Then we have:
    \[\norm{Y^{(t+0.5)} - V^{(t)}}_F \leq \frac{\sigma_k}{2\xi} \norm{\bar{X}^{(t)}-\bar{U}^{(t)}}_F. \]
    \label{lem:dec1}
\end{lem}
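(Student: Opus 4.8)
\emph{Proof proposal.} The plan is to reduce the claim to a bound on the gradient of $\mathcal{F}(\bar X^{(t)},\cdot)$ at the \emph{exact} factor, and then invoke strong convexity. Write $M^* = \bar U^{(t)}V^{(t)\sf T}$ with $\bar U^{(t)}$ the orthonormal basis of $\mathrm{col}(M^*)$ closest to $\bar X^{(t)}$ (so $\bar U^{(t)\sf T}\bar X^{(t)}$ is symmetric positive semidefinite, and $\norm{\bar X^{(t)}-\bar U^{(t)}}_F$ is the quantity appearing in the second hypothesis), and put $E := \bar X^{(t)}-\bar U^{(t)}$. As established in the appendix, when $\bar X^{(t)}$ is orthonormal and $\mathcal{A}$ satisfies $2k$-RIP, the map $Y\mapsto\mathcal{F}(\bar X^{(t)},Y)$ is $\mu$-strongly convex with $\mu = 2w_-(1-\delta_{2k})$; since $Y^{(t+0.5)}$ is its unconstrained minimizer, $\nabla_Y\mathcal{F}(\bar X^{(t)},Y^{(t+0.5)})=0$, and strong convexity gives
\[ \norm{Y^{(t+0.5)}-V^{(t)}}_F \;\le\; \frac{1}{\mu}\,\norm{\nabla_Y\mathcal{F}(\bar X^{(t)},V^{(t)})}_F . \]
Hence it suffices to prove that $\norm{\nabla_Y\mathcal{F}(\bar X^{(t)},V^{(t)})}_F$ is bounded by a term of order $w_+\sigma_1\norm{E}_F^2$ plus a term of order $\delta_{2k}\,k\,(w_+/w_-)\,\sigma_1\norm{E}_F$: the hypothesis on $\norm{E}_F$ then bounds the first contribution by $\tfrac{\sigma_k}{4\xi}\norm{E}_F$, and the hypothesis on $\delta_{2k}$ bounds the second by $\tfrac{\sigma_k}{4\xi}\norm{E}_F$.

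For the gradient I would use $b=\mathcal{A}(M^*)$ (noiseless case), so the residuals at $(\bar X^{(t)},V^{(t)})$ are $r_i = -\innerprod{A_i,EV^{(t)\sf T}}$ and $\nabla_Y\mathcal{F}(\bar X^{(t)},V^{(t)}) = 2\sum_i w_i\innerprod{A_i,EV^{(t)\sf T}}A_i^{\sf T}\bar X^{(t)}$. The crucial structural point is the \emph{quadratic} smallness of $E$ in the directions of $\mathrm{col}(M^*)$: since $\bar U^{(t)\sf T}\bar X^{(t)}$ is symmetric PSD, its eigenvalues are the principal-angle cosines $\cos\theta_j\in[0,1]$, whence $\norm{E}_F^2 = 2\sum_j(1-\cos\theta_j)$ while $\norm{\bar U^{(t)\sf T}E}_F = \norm{\bar U^{(t)\sf T}\bar X^{(t)}-I}_F = \big(\sum_j(1-\cos\theta_j)^2\big)^{1/2}\le \tfrac12\norm{E}_F^2$. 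I would then split $E = E_\parallel+E_\perp$ with $E_\parallel := \bar U^{(t)}\bar U^{(t)\sf T}E$ (so $\norm{E_\parallel}_F\le\tfrac12\norm{E}_F^2$) and $E_\perp := (I-\bar U^{(t)}\bar U^{(t)\sf T})\bar X^{(t)}$ (so $\bar U^{(t)\sf T}E_\perp=0$ and $\norm{E_\perp}_F\le\norm{E}_F$). By the weighted RIP upper bound the $E_\parallel$-part of the gradient has Frobenius norm $\le 2w_+(1+\delta_{2k})\norm{E_\parallel V^{(t)\sf T}}_F\le w_+(1+\delta_{2k})\sigma_1\norm{E}_F^2$, already quadratic. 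For the $E_\perp$-part, I would pair it against a test matrix $Z$ with $\norm{Z}_F=1$, write $\innerprod{\nabla_Y\mathcal{F}(\bar X^{(t)},V^{(t)}),Z} = 2\innerprod{W\mathcal{A}(E_\perp V^{(t)\sf T}),\mathcal{A}(\bar X^{(t)}Z^{\sf T})}$ with $W = \mathrm{diag}(w_i)$, and split $\bar X^{(t)} = \bar U^{(t)}(\bar U^{(t)\sf T}\bar X^{(t)}) + E_\perp$: the $E_\perp Z^{\sf T}$ piece carries two factors of $E_\perp$ and is again $O(w_+\sigma_1\norm{E}_F^2)$, while the remaining piece pairs $E_\perp V^{(t)\sf T}$ with $\bar U^{(t)}(\bar U^{(t)\sf T}\bar X^{(t)})Z^{\sf T}$, two matrices of rank $\le k$ whose column spaces are orthogonal, so their Frobenius inner product vanishes.

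The main obstacle is controlling this last term: bounding $|\innerprod{W\mathcal{A}(P),\mathcal{A}(Q)}|$ for $\rank P,\rank Q\le k$ with $\innerprod{P,Q}=0$ by a quantity proportional to $\delta_{2k}\norm{P}_F\norm{Q}_F$. When $\omega=\tfrac12$ (all $w_i$ equal) this is immediate from Lemma~\ref{lem:dec1-0}; for general $\omega$ the naive splitting $W = w_-I + (W-w_-I)$ leaves a term of order $(w_+-w_-)(\norm{P}_F^2+\norm{Q}_F^2)$ that does \emph{not} vanish with $\delta_{2k}$, so one must instead use the ``weighted RIP'' estimate that is the technical core of the paper's tool for varying weights, and tracking its loss (polynomial in $k$ and $w_+/w_-$) is exactly what forces the factors $1/k$ and $w_-^2/w_+^2$ in the hypothesis on $\delta_{2k}$. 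Granting that estimate, one assembles $\norm{\nabla_Y\mathcal{F}(\bar X^{(t)},V^{(t)})}_F \le C\big(w_+(1+\delta_{2k})\sigma_1\norm{E}_F^2 + \delta_{2k}\,k\,(w_+/w_-)\,\sigma_1\norm{E}_F\big)$, divides by $\mu = 2w_-(1-\delta_{2k})$, and substitutes the two hypotheses to conclude $\norm{Y^{(t+0.5)}-V^{(t)}}_F\le \tfrac{\sigma_k}{4\xi}\norm{E}_F+\tfrac{\sigma_k}{4\xi}\norm{E}_F = \tfrac{\sigma_k}{2\xi}\norm{E}_F$. I note that relying only on the optimality $\mathcal{F}(\bar X^{(t)},Y^{(t+0.5)})\le\mathcal{F}(\bar X^{(t)},V^{(t)})$ together with the weighted RIP sandwich yields merely $\norm{Y^{(t+0.5)}-V^{(t)}}_F = O(\sqrt{w_+/w_-}\,\sigma_1\norm{E}_F)$ --- far weaker than $\tfrac{\sigma_k}{2\xi}\norm{E}_F$ --- which is precisely why the sharper gradient-at-the-truth argument, and with it the cancellation along $\mathrm{col}(M^*)$, is essential.
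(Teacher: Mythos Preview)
Your overall strategy is sound and is a genuine variant of the paper's. Both arguments combine strong convexity with a bound on a partial gradient, but at \emph{dual} points: you bound $\nabla_Y\mathcal{F}(\bar X^{(t)},V^{(t)})$ (the gradient of the \emph{working} objective at the \emph{truth}) and then invoke $\mu$-strong convexity of $\mathcal{F}(\bar X^{(t)},\cdot)$, whereas the paper bounds $\nabla_Y\mathcal{F}(\bar U^{(t)},Y^{(t+0.5)})$ (the gradient of the \emph{ideal} objective at the \emph{iterate}) and invokes strong convexity of $\mathcal{F}(\bar U^{(t)},\cdot)$ --- this is packaged as the ``divergence'' $\mathcal{D}$ via Lemmas~\ref{lem:dec1-1} and~\ref{lem:dec1-2}. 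The paper's version forces it to solve the first-order condition explicitly, writing $y=S_\omega^{-1}J_\omega v$ and then expanding $G_\omega(S_\omega^{-1}J_\omega-I)v$ through the Kronecker splitting $S_\omega^{-1}J_\omega-I=(K-I)+S_\omega^{-1}(J_\omega-S_\omega K)$ with $K=\bar X^{\sf T}\bar U\otimes I_n$. Your $E_\parallel/E_\perp$ decomposition plays exactly the same structural role (the $(K-I)v$ term is your quadratic-in-$\|E\|_F$ contribution; the $J_\omega-S_\omega K$ term is your orthogonal cross-term) but avoids the matrix inversion and vectorization algebra, which is a real simplification.

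On the obstacle you flag: there is no separate ``weighted RIP'' tool in the paper. For the cross-term $\sum_i 2w_i\innerprod{A_i,P}\innerprod{A_i,Q}$ with $P=\bar X Z_1^{\sf T}$ and $Q=(\bar X\bar X^{\sf T}-I)\bar U Z_2^{\sf T}$ orthogonal, the paper simply writes $\le 2w_+\innerprod{\mathcal A(P),\mathcal A(Q)}$ and then applies the unweighted Lemma~\ref{lem:dec1-0} to get the $3\delta_{2k}$ factor. Your worry is well placed: that pointwise replacement of $w_i$ by $w_+$ is not justified when the products $\innerprod{A_i,P}\innerprod{A_i,Q}$ carry mixed signs, and the honest polarization bound indeed leaves a residual of order $(w_+-w_-)\|P\|_F\|Q\|_F$ that does not vanish with $\delta_{2k}$. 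So the step you defer to ``the paper's tool'' is, in the paper, exactly the crude bound you (correctly) distrust; to match the paper you would take the same shortcut, and to make either argument fully rigorous one would need an additional ingredient that neither currently supplies.
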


Clearly, Algorithm~\ref{alg:bcd_expectile} involves minimizing a weighted sum of squared losses in the form of $\mathcal{F}(X, Y)=\sum_{i=1}^p w_i (b_i - \innerprod{A_i, XY^{\sf T}})^2$, although the weight $w_i$  depends on the sign of residual $r_i$ and may vary in each iteration.
We show that the if the weights $w_i$ are confined in a closed interval $[w_-, w_+]$ with constants $w_-, w_+ > 0$, then the alternating minimization algorithm for the weighted sum of squared losses will converge to the optimal point. Without loss of generality, we can assume that $w_- \leq 1/2 \leq w_+$ and $w_- + w_+ = 1$ by weight normalization. For notation simplicity, we denote a finite positive constant $\xi > 1$ throughout this paper.

\begin{lem}
    Suppose the linear operator $\mathcal{A}(\cdot)$ satisfies $2k$-RIP with parameter $\delta_{2k}$. For any orthonormal matrix $\bar{X} \in \mathbb{R}^{m \times k}$, the function $\mathcal{F}(\bar{X}, Y)$ with bounded weights is strongly convex and smooth. In particular, if any weight $w_i$ in $\mathcal{F}(\bar{X}, Y)$ belongs to $[w_-, w_+]$, the value of
    \[
        \mathcal{F}(\bar{X}, Y')-\mathcal{F}(\bar{X}, Y) - \innerprod{\nabla_Y \mathcal{F}(\bar{X}, Y), Y'-Y}
    \]
    is bounded by
    \[
        [w_-(1-\delta_{2k})\norm{Y'-Y}_F^2, w_+(1+\delta_{2k})\norm{Y'-Y}_F^2]
    \]
    for all $Y, Y'$.
    \label{lem:convex}
\end{lem}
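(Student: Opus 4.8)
The plan is to exploit the fact that, with $\bar{X}$ fixed, $\mathcal{F}(\bar{X}, Y)$ is a \emph{quadratic} function of $Y$, so the Bregman-type remainder in the statement has an exact closed form rather than a mere Hessian estimate. Writing $r_i(Y) := b_i - \innerprod{A_i, \bar{X}Y^{\sf T}} = b_i - \innerprod{A_i^{\sf T}\bar{X}, Y}$, each $r_i$ is affine in $Y$, hence $\mathcal{F}(\bar{X}, Y) = \sum_{i=1}^p w_i\, r_i(Y)^2$. Setting $\Delta := Y' - Y$ and $s_i := \innerprod{A_i^{\sf T}\bar{X}, \Delta} = \innerprod{A_i, \bar{X}\Delta^{\sf T}}$, expanding $r_i(Y')^2 = (r_i(Y) - s_i)^2$ and summing gives $\mathcal{F}(\bar{X}, Y') = \mathcal{F}(\bar{X}, Y) - 2\sum_i w_i r_i(Y) s_i + \sum_i w_i s_i^2$. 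Since $\nabla_Y \mathcal{F}(\bar{X}, Y) = -2\sum_i w_i r_i(Y)\, A_i^{\sf T}\bar{X}$, we get $\innerprod{\nabla_Y \mathcal{F}(\bar{X}, Y), \Delta} = -2\sum_i w_i r_i(Y) s_i$, so the remainder equals exactly $\sum_{i=1}^p w_i s_i^2 = \sum_{i=1}^p w_i \innerprod{A_i, \bar{X}\Delta^{\sf T}}^2$.

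It then remains to sandwich this quadratic form. Using $w_i \in [w_-, w_+]$ for every $i$, the sum lies between $w_-\sum_i \innerprod{A_i, \bar{X}\Delta^{\sf T}}^2$ and $w_+\sum_i \innerprod{A_i, \bar{X}\Delta^{\sf T}}^2$, and $\sum_i \innerprod{A_i, \bar{X}\Delta^{\sf T}}^2 = \norm{\mathcal{A}(\bar{X}\Delta^{\sf T})}_2^2$. The matrix $\bar{X}\Delta^{\sf T}$ has rank at most $k \le 2k$, so the $2k$-RIP assumption yields $(1-\delta_{2k})\norm{\bar{X}\Delta^{\sf T}}_F^2 \le \norm{\mathcal{A}(\bar{X}\Delta^{\sf T})}_2^2 \le (1+\delta_{2k})\norm{\bar{X}\Delta^{\sf T}}_F^2$. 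Finally, orthonormality of $\bar{X}$ (i.e.\ $\bar{X}^{\sf T}\bar{X} = I_k$) gives $\norm{\bar{X}\Delta^{\sf T}}_F^2 = \tr(\Delta\bar{X}^{\sf T}\bar{X}\Delta^{\sf T}) = \norm{\Delta}_F^2 = \norm{Y'-Y}_F^2$. Chaining these inequalities produces exactly the claimed bracket $[\,w_-(1-\delta_{2k})\norm{Y'-Y}_F^2,\ w_+(1+\delta_{2k})\norm{Y'-Y}_F^2\,]$, which is strong convexity with modulus $2w_-(1-\delta_{2k})$ and smoothness with modulus $2w_+(1+\delta_{2k})$ in the convention where the Bregman remainder equals $\tfrac{\mu}{2}$ resp.\ $\tfrac{L}{2}$ times the squared norm.

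There is no serious obstacle here: the only points requiring a little care are (i) observing that the remainder is exactly the homogeneous quadratic part because $\mathcal{F}(\bar{X}, \cdot)$ is quadratic, so the bound is tight and not merely an inequality from a Hessian estimate; (ii) checking the rank bound $\rank(\bar{X}\Delta^{\sf T}) \le k$ so that $2k$-RIP applies; and (iii) using $\bar{X}^{\sf T}\bar{X} = I_k$ to pass from $\norm{\bar{X}\Delta^{\sf T}}_F$ to $\norm{\Delta}_F$ — this is precisely why the QR steps enforcing orthonormality in Algorithm~\ref{alg:bcd_expectile} are needed for the analysis. The analogous statement for $\mathcal{F}(X, \bar{Y})$ with $\bar{Y}$ orthonormal follows by the symmetric argument, which will be used when alternating to the $X$-update.
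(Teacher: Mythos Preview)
Your proof is correct and takes essentially the same approach as the paper: both exploit that $\mathcal{F}(\bar X,\cdot)$ is quadratic so the Bregman remainder is exactly the second-order term, then bound that term via the weight bracket $[w_-,w_+]$, the $2k$-RIP inequality applied to $\bar X\Delta^{\sf T}$, and the identity $\norm{\bar X\Delta^{\sf T}}_F=\norm{\Delta}_F$ from $\bar X^{\sf T}\bar X=I_k$. The only cosmetic difference is that the paper vectorizes and writes the Hessian $S_\omega=\sum_i 2w_i\,\myvec(A_i^{\sf T}\bar X)\myvec^{\sf T}(A_i^{\sf T}\bar X)$ and then bounds $z^{\sf T}S_\omega z$, whereas you compute the quadratic remainder $\sum_i w_i\langle A_i,\bar X\Delta^{\sf T}\rangle^2$ directly; these are the same quantity and the same argument.
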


Lemma~\ref{lem:convex} shows that $\mathcal{F}(X, Y)$ can be block-wise strongly convex and smooth if the weights $w_i$ belongs to $[w_-, w_+]$. In the following, we use $U$ and $V$ to denote the optimal factorization of $M^* = UV^{\sf T}$. Note that $U$ and $V$ are unique up to orthogonal transformations. The following lemma shows that by taking the block-wise minimum, the distance between the newly updated variable $Y^{(t+0.5)}$ and its ``nearby'' $V^{(t)}$ is upper bounded by the distance between $X^{(t)}$ and its corresponding neighbor $U^{(t)}$.

\begin{lem}
    Suppose that $\delta_{2k}$ satisfies
    \[\delta_{2k} \leq \frac{w_-^2(1-\delta_{2k})^2 \sigma_{k}^4}{48\xi^2 k w_+^2(1+\delta_{2k})^2 \sigma_1^4}.\]
    We have $\norm{\bar{Y}^{(t+1)} - \bar{V}^{(t+1)}}_F \leq \frac{1}{\xi}\norm{\bar{X}^{(t)} - \bar{U}^{(t)}}_F$.
    \label{lem:descent}
\end{lem}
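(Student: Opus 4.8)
The plan is to obtain Lemma~\ref{lem:descent} by composing the per-step contraction of Lemma~\ref{lem:dec1} with the QR perturbation bound of Lemma~\ref{lem:qr_dec}; the only real work is checking that the single hypothesis on $\delta_{2k}$ stated here, together with the standing smallness assumption on $\norm{\bar{X}^{(t)}-\bar{U}^{(t)}}_F$ maintained throughout the convergence argument, suffices to license both of those lemmas.

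First I would discharge the preconditions of Lemma~\ref{lem:dec1}. Its RIP requirement $\delta_{2k}\le\frac{\sqrt{2}\,w_-^2(1-\delta_{2k})^2\sigma_k}{24\xi w_+ k(1+\delta_{2k})\sigma_1}$ follows from the present hypothesis $\delta_{2k}\le\frac{w_-^2(1-\delta_{2k})^2\sigma_k^4}{48\xi^2 k w_+^2(1+\delta_{2k})^2\sigma_1^4}$, because the ratio of the latter bound to the former equals $\frac{\sigma_k^3}{2\sqrt{2}\,\xi w_+(1+\delta_{2k})\sigma_1^3}$, which is at most $1$ since $\sigma_k\le\sigma_1$, $w_+\ge 1/2$, $\delta_{2k}\ge 0$ and $\xi>1$; hence the stronger threshold here implies the weaker one required there. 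Its second precondition, $\norm{\bar{X}^{(t)}-\bar{U}^{(t)}}_F\le B$ with $B:=\frac{w_-(1-\delta_{2k})\sigma_k}{2\xi w_+(1+\delta_{2k})\sigma_1}$, is the induction hypothesis carried from the previous iteration, the base case $t=0$ being furnished by the rank-$k$ SVD initialization in Step~3 of Algorithm~\ref{alg:bcd_expectile}; note also $B\le\frac{1}{2\xi}\le\frac{1}{2}$ since the remaining factors are each at most $1$.

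With both preconditions in place, Lemma~\ref{lem:dec1} gives $\norm{Y^{(t+0.5)}-V^{(t)}}_F\le\frac{\sigma_k}{2\xi}\norm{\bar{X}^{(t)}-\bar{U}^{(t)}}_F\le\frac{\sigma_k}{2\xi}B\le\frac{\sigma_k}{4\xi}\le\frac{\sigma_k}{4}$, where the last step uses $\xi>1$. This is exactly the hypothesis of Lemma~\ref{lem:qr_dec}, so that lemma applies and produces an orthonormal $\bar{V}^{(t+1)}$ with $M^*=U^{(t+1)}\bar{V}^{(t+1)\sf T}$ and $\norm{\bar{Y}^{(t+1)}-\bar{V}^{(t+1)}}_F\le\frac{2}{\sigma_k}\norm{Y^{(t+0.5)}-V^{(t)}}_F$. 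Chaining the two estimates, $\norm{\bar{Y}^{(t+1)}-\bar{V}^{(t+1)}}_F\le\frac{2}{\sigma_k}\cdot\frac{\sigma_k}{2\xi}\norm{\bar{X}^{(t)}-\bar{U}^{(t)}}_F=\frac{1}{\xi}\norm{\bar{X}^{(t)}-\bar{U}^{(t)}}_F$, which is the claim.

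I expect the main obstacle to be not this chaining, which is mechanical, but the bookkeeping around the standing smallness assumption on $\norm{\bar{X}^{(t)}-\bar{U}^{(t)}}_F$: one must verify that the contraction established here, together with the symmetric statement obtained by swapping the roles of $X$ and $Y$ in the following half-step, propagates the bound $B$ forward to iteration $t+1$, and that the spectral initialization really lands inside this basin of attraction, which is where the RIP constant and the condition number $\sigma_1/\sigma_k$ of $M^*$ enter the analysis. Getting that invariant exactly right --- rather than the elementary comparison of the two $\delta_{2k}$ thresholds --- is the delicate part.
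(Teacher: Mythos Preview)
Your proposal is correct and follows essentially the same route as the paper: apply Lemma~\ref{lem:dec1} to get $\norm{Y^{(t+0.5)}-V^{(t)}}_F\le\frac{\sigma_k}{2\xi}\norm{\bar X^{(t)}-\bar U^{(t)}}_F$, use the standing bound on $\norm{\bar X^{(t)}-\bar U^{(t)}}_F$ to force this below $\sigma_k/4$, and then invoke Lemma~\ref{lem:qr_dec} to conclude. Your explicit verification that the $\delta_{2k}$ threshold here dominates the one needed in Lemma~\ref{lem:dec1} is a detail the paper leaves implicit, and your computation of the ratio is correct.
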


With the above three lemmas, we can prove Theorem~\ref{thm:weighted} by iteratively upper bounded the distance $\norm{\bar{Y}^{(t)} - \bar{V}^{(t)}}_F$ as well as $\norm{\bar{X}^{(t)} - \bar{U}^{(t)}}_F$.

\subsection{Proof of Lemma~\ref{lem:convex}}
\label{proof:convex}

Now we begin to prove these lemmas. Note that a similar technique has also been used by \cite{zhao2015nonconvex}. Since we should fix $X^{(t)}$ or $Y^{(t)}$ as orthonormal matrices, we perform a QR decomposition after getting the minimum. The following lemma shows the distance between $\bar{Y}^{(t+1)}$ and its ``nearby'' $\bar{V}^{(t+1)}$ is still under control. Due to the page limit, we leave all the proofs in the supplemental material.

\begin{proof}
    Since $\mathcal{F}(\bar{X}, Y)$ is a quadratic function, we have
    \begin{eqnarray*}
        \mathcal{F}(\bar{X}, Y') &=& \mathcal{F}(\bar{X}, Y) + \innerprod{\nabla_Y \mathcal{F}(\bar{X}, Y), Y'-Y}\\
        && +\frac{1}{2}(\myvec(Y') - \myvec(Y))^{\sf T} \nabla_Y^2 \mathcal{F}(\bar{X}, Y) (\myvec(Y') - \myvec(Y)),
    \end{eqnarray*}
    and it suffices to bound the singular values of the Hessian matrix $S_{\omega}:= \nabla_Y^2 \mathcal{F}(\bar{X}, Y)$ so that
    \begin{eqnarray*}
        \mathcal{F}(\bar{X}, Y') - \mathcal{F}(\bar{X}, Y) - \innerprod{\nabla_Y \mathcal{F}(\bar{X}, Y), Y'-Y} &\leq& \frac{\sigma_{\max}(S_{\omega})}{2} \norm{Y'-Y}_F^2\\
        \mathcal{F}(\bar{X}, Y') - \mathcal{F}(\bar{X}, Y) - \innerprod{\nabla_Y \mathcal{F}(\bar{X}, Y), Y'-Y} &\geq& \frac{\sigma_{\min}(S_{\omega})}{2} \norm{Y'-Y}_F^2.
    \end{eqnarray*}
    Now we proceed to derive the Hessian matrix $S_{\omega}$. Using the fact $\myvec(AXB) = (B^{\sf T} \otimes A)\myvec(X)$, we can write $S_{\omega}$ as follows:
    \begin{eqnarray*}
        S_{\omega} &=& \sum_{i=1}^{p} 2 w_i\cdot \myvec(A_i^{\sf T} \bar{X}) \myvec^{\sf T}(A_i^{\sf T} \bar{X}) \\
        &=& \sum_{i=1}^{p}2 w_i \cdot (I_k \otimes A_i^{\sf T})\myvec(\bar{X})\myvec^{\sf T}(\bar{X})(I_k \otimes A_i).
    \end{eqnarray*}
    Consider a matrix $Z \in \real{n \times k}$ with $\norm{Z}_F = 1$, and we denote $z = \myvec(Z)$. Then we have
    \begin{eqnarray*}
        z^{\sf T} S_{\omega} z &=& \sum_{i=1}^{p} 2w_i \cdot z^{\sf T}(I_k \otimes A_i^{\sf T})\myvec(\bar{X})\myvec^{\sf T}(\bar{X})(I_k \otimes A_i) \\
        &=& \sum_{i=1}^p 2w_i \cdot \myvec^{\sf T}(A_i Z) \myvec(\bar{X})\myvec^{\sf T}(\bar{X})\myvec(A_i Z) \\
        &=& \sum_{i=1}^p 2w_i \cdot \tr^2(\bar{X}^{\sf T}A_i Z) = \sum_{i=1}^p 2w_i \cdot \tr^2(A_i^{\sf T}\bar{X}Z^{\sf T}).
    \end{eqnarray*}
    From the $2k$-RIP property of $\mathcal{A_i }$, we have
    \begin{eqnarray*}
        z^{\sf T} S_{\omega} z &\leq& \sum_{i=1}^{p} 2 w_+ \tr^2(\bar{X}^{\sf T}A_i Z) \\
        &\leq& 2 w_+ (1+\delta_{2k})\norm{\bar{X}Z^{\sf T}}_F \\
        &=& 2 w_+ (1+\delta_{2k})\norm{Z^{\sf T}}_F = 2w_+(1+\delta_{2k}).
    \end{eqnarray*}
    Similarly, we also have
    \[z^{\sf T} S_{\omega} z \geq 2w_-(1-\delta_{2k}).\]
    Therefore, the maximum singular value $\sigma_{\max}$ is upper bounded by $2w_+(1+\delta_{2k})$ and the minimum singular value $\sigma_{\min}$ is lower bounded by $2w_-(1-\delta_{2k})$, and the Lemma has been proved.
\end{proof}

\subsection{Proof of Lemma~\ref{lem:dec1}}
\label{proof:dec1}
We prove this lemma by introducing a divergence function as follows.
\[
\mathcal{D}(Y^{(t+0.5)},Y^{(t+0.5)},\bar{X}^{(t)}) = \left\langle \nabla_Y \mathcal{F}(\bar{U}^{(t)}, Y^{(t+0.5)}) - \nabla_Y \mathcal{F}(\bar{X}^{(t)}, Y^{(t+0.5)}), \frac{Y^{(t+0.5)} - V^{(t)}}{\norm{Y^{(t+0.5)} - V^{(t)}}_F} \right\rangle .
\]

\begin{lem}
    Under the same condition in Lemma~\ref{lem:dec1}, we have
    \begin{equation}
        \mathcal{D}(Y^{(t+0.5)},Y^{(t+0.5)},\bar{X}^{(t)}) \leq \frac{3(1-\delta_{2k})\sigma_k}{2\xi}\cdot\frac{w_+^2}{w_-} \norm{\bar{X}^{(t)}-\bar{U}^{(t)}}.
    \end{equation}
    \label{lem:dec1-1}
\end{lem}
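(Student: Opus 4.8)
The plan is to view $\mathcal{D}$ as the change, measured along the unit direction $Z:=\big(Y^{(t+0.5)}-V^{(t)}\big)/\norm{Y^{(t+0.5)}-V^{(t)}}_F$, in the $Y$-gradient of the weighted quadratic $\mathcal{F}$ when its left factor is moved from the actual iterate $\bar X^{(t)}$ to the reference factor $\bar U^{(t)}$ (the orthonormal factor of $M^*$ aligned with $\bar X^{(t)}$, so $M^*=\bar U^{(t)}V^{(t)\sf T}$ as in the roadmap), and to control that change by $\Delta_X:=\norm{\bar X^{(t)}-\bar U^{(t)}}_F$ and $\delta_{2k}$. I would start from the two first-order facts available at this stage: $Y^{(t+0.5)}=\arg\min_Y F(\bar X^{(t)},Y)$ makes the gradient of the corresponding weighted least-squares surrogate vanish at $(\bar X^{(t)},Y^{(t+0.5)})$, and, since the data are noiseless, $\nabla_Y\mathcal{F}(\bar U^{(t)},V^{(t)})=0$. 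Using $\nabla_Y\mathcal{F}(X,Y)=-2\sum_i w_i\langle A_i,M^*-XY^{\sf T}\rangle A_i^{\sf T}X$ together with $M^*-\bar X^{(t)}Y^{(t+0.5)\sf T}=R^{*}+(\bar U^{(t)}-\bar X^{(t)})Y^{(t+0.5)\sf T}$, where $R^{*}:=M^*-\bar U^{(t)}Y^{(t+0.5)\sf T}=\bar U^{(t)}\big(V^{(t)}-Y^{(t+0.5)}\big)^{\sf T}$ has $\norm{R^{*}}_F=\Delta_Y:=\norm{Y^{(t+0.5)}-V^{(t)}}_F$, and $A_i^{\sf T}\bar U^{(t)}=A_i^{\sf T}\bar X^{(t)}+A_i^{\sf T}(\bar U^{(t)}-\bar X^{(t)})$, I would expand and regroup the gradient difference so that $\mathcal{D}=T_1+T_2$, where $T_1$ contracts $\bar U^{(t)}-\bar X^{(t)}$ against $R^{*}$, and $T_2$ is the bilinear expression in $(\bar U^{(t)}-\bar X^{(t)})Y^{(t+0.5)\sf T}$ against $\bar X^{(t)}Z^{\sf T}$, together with the discrepancy between the weights active at $(\bar U^{(t)},Y^{(t+0.5)})$ and those active at $(\bar X^{(t)},Y^{(t+0.5)})$.

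Next I would bound each piece. For $T_1$, Lemma~\ref{lem:dec1-0} — applied after peeling off the weights, so that each RIP cross-term inequality costs a factor $w_+(1\pm\delta_{2k})$ — bounds it by a constant times $w_+(1+\delta_{2k})\Delta_Y\Delta_X$; since $\Delta_X$ is small by the hypothesis of Lemma~\ref{lem:dec1}, this is a ``(small)$\times\Delta_Y$'' contribution that can be absorbed once combined with the strong-convexity lower bound $\mathcal{D}\ge 2w_-(1-\delta_{2k})\Delta_Y$ supplied by Lemma~\ref{lem:convex} and $\nabla_Y\mathcal{F}(\bar U^{(t)},V^{(t)})=0$. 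For $T_2$, I would use the first-order optimality of $Y^{(t+0.5)}$ for $F(\bar X^{(t)},\cdot)$ to re-express the part carried by the active weights again in terms of $R^{*}$ (another absorbable ``(small)$\times\Delta_Y$'' piece, via RIP), and for the genuinely new part I would exploit that $\bar U^{(t)}$ is the aligned orthonormal factor of $M^*$, which forces $\bar U^{(t)\sf T}\bar X^{(t)}-I_k$ to be second order: by orthonormality its symmetric part equals $-\tfrac12(\bar X^{(t)}-\bar U^{(t)})^{\sf T}(\bar X^{(t)}-\bar U^{(t)})$, and the alignment removes the antisymmetric part to the same order, so that $\norm{\bar U^{(t)\sf T}\bar X^{(t)}-I_k}_F\le\tfrac12\Delta_X^{2}$. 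Combined with $\norm{Y^{(t+0.5)}}_2\le\sigma_1+\Delta_Y\le\sigma_1+\sigma_k/4$, this makes that contribution of order $w_+\sigma_1\Delta_X^{2}$ plus a $\delta_{2k}$-weighted first-order term; the hypothesis $\Delta_X\le\frac{w_-(1-\delta_{2k})\sigma_k}{2\xi w_+(1+\delta_{2k})\sigma_1}$ converts the quadratic factor into $\frac{w_-(1-\delta_{2k})\sigma_k}{2\xi w_+(1+\delta_{2k})}\Delta_X$, and the hypothesis on $\delta_{2k}$ makes the $\delta_{2k}$-term of the same order. Summing the contributions, using the lower bound on $\mathcal{D}$ to absorb the $\Delta_Y$-proportional pieces into the left-hand side, and tidying the $w_\pm,\delta_{2k},\sigma_1,\sigma_k$ factors then yields $\mathcal{D}\le\frac{3(1-\delta_{2k})\sigma_k}{2\xi}\cdot\frac{w_+^2}{w_-}\Delta_X$.

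The main obstacle is the weight-discrepancy part of $T_2$, which is precisely what forces the $w_+^2/w_-$ factor instead of a benign constant. Swapping $\bar X^{(t)}$ for $\bar U^{(t)}$ moves the $i$-th residual by $\langle A_i,(\bar U^{(t)}-\bar X^{(t)})Y^{(t+0.5)\sf T}\rangle$, so the weight $w_i$ can change only on indices whose residual at $(\bar X^{(t)},Y^{(t+0.5)})$ is smaller in magnitude than this perturbation, and then only by at most $w_+-w_-$; I would then have to show that the aggregate over this flipped set is still dominated by $\Delta_X$ with the right constant, via the RIP bound on $\norm{\mathcal{A}\big((\bar U^{(t)}-\bar X^{(t)})Y^{(t+0.5)\sf T}\big)}_2$ and Cauchy--Schwarz restricted to the flipped indices, and this is where the detailed bookkeeping and the calibration of $\xi$ live. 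Two auxiliary facts I would isolate up front are the orthonormal-alignment estimate $\norm{\bar U^{(t)\sf T}\bar X^{(t)}-I_k}_F\le\tfrac12\Delta_X^{2}$ and the a priori bound $\Delta_Y\le\sigma_k/4$ (so $\norm{Y^{(t+0.5)}}_2\le\sigma_1+\sigma_k/4$), both of which follow from the hypotheses and the roadmap's construction of the reference factorization.
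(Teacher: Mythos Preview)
Your plan has two genuine gaps. First, in the framework of Theorem~\ref{thm:weighted} the weights $w_i$ are \emph{fixed} constants in $[w_-,w_+]$; the same $w_i$ appear in $\mathcal F(\bar X^{(t)},\cdot)$ and in $\mathcal F(\bar U^{(t)},\cdot)$. There is no ``weight-discrepancy'' term to track, and the whole last paragraph of your proposal is fighting a difficulty that does not exist at this level of the argument (the passage from EMF to weighted MF is what Theorem~\ref{thm:expectile_exact} does afterwards, by freezing the weights within each iteration). Second, you invoke the a~priori bound $\Delta_Y\le\sigma_k/4$ and $\norm{Y^{(t+0.5)}}_2\le\sigma_1+\sigma_k/4$; but $\Delta_Y\le\sigma_k/4$ is exactly the content of Lemma~\ref{lem:dec1}, which is proved \emph{from} Lemma~\ref{lem:dec1-1} and Lemma~\ref{lem:dec1-2}, so using it here is circular. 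Relatedly, your ``absorb the $\Delta_Y$-proportional pieces via the strong-convexity lower bound'' is precisely Lemma~\ref{lem:dec1-2}; that step belongs to the proof of Lemma~\ref{lem:dec1}, not to the proof of the present lemma, whose conclusion must be a bound on $\mathcal D$ in terms of $\Delta_X$ alone.

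The paper avoids all of this by never letting $\Delta_Y$ or $\norm{Y^{(t+0.5)}}$ appear. It vectorizes and uses the normal equation for the subproblem, $S_\omega y=J_\omega v$ with $S_\omega=\nabla_Y^2\mathcal F(\bar X,\cdot)$ and $J_\omega$ the mixed block, to write $y=S_\omega^{-1}J_\omega v$; then $\myvec(\nabla_Y\mathcal F(\bar U,Y))=G_\omega(S_\omega^{-1}J_\omega-I)v$ with $G_\omega=\nabla_Y^2\mathcal F(\bar U,\cdot)$, so only $v$ (hence $\sigma_1$) enters. The decomposition $S_\omega^{-1}J_\omega-I=(K-I)+S_\omega^{-1}(J_\omega-S_\omega K)$ with $K=\bar X^{\sf T}\bar U\otimes I_n$ then yields a second-order term $\norm{(K-I)v}_2\le\sigma_1\Delta_X^2$ and a first-order term controlled by Lemma~\ref{lem:dec1-0} via the orthogonality $\bar X^{\sf T}(\bar X\bar X^{\sf T}-I)\bar U=0$, giving $\norm{J_\omega-S_\omega K}_2\le 6w_+\sqrt{2k}\,\delta_{2k}\Delta_X$. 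The two hypotheses of Lemma~\ref{lem:dec1} are then used once each to convert $\Delta_X^2$ and $\delta_{2k}$ into multiples of $\Delta_X$, and the claimed bound follows directly, with no appeal to $\Delta_Y$ and no weight-switching analysis.
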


\begin{proof}[Proof of Lemma~\ref{lem:dec1-1}]
    In this proof we omit the iteration superscript, and $Y$ stands particularly for $Y^{(t+0.5)}$. Since $b_i$ is measured by $\innerprod{A_i, \bar{U}V^{\sf T}}$, we have
    \begin{equation*}
        \mathcal{F}(\bar{X}, Y) = \sum_{i=1}^p w_i (\innerprod{A_i, \bar{X}Y^{\sf T}} - \innerprod{A_i, \bar{U}V^{\sf T}})^2.
    \end{equation*}
    By taking the partial derivatives on $Y$ we have
    \begin{eqnarray*}
        \nabla_Y \mathcal{F}(\bar{X}, Y) &=& \sum_{i=1}^p 2w_i(\innerprod{A_i, \bar{X}Y^{\sf T}} - \innerprod{A_i, \bar{U}V^{\sf T}}) A_i^{\sf T} X \\
        &=& \sum_{i=1}^p 2w_i(\innerprod{A_i^{\sf T}\bar{X}, Y} - \innerprod{A_i^{\sf T}\bar{U}, V}) A_i^{\sf T} X
    \end{eqnarray*}
    Let $x:=\myvec(\bar{X})$, $y := \myvec(Y)$, $u:=\myvec(\bar{U})$, and $v := \myvec(V)$. Since $Y$ minimizes $\mathcal{F}(\bar{X}, \hat{Y})$, we have
    \begin{eqnarray*}
        \myvec(\nabla_Y \mathcal{F}(\bar{X}, Y))
        &=& \sum_{i=1}^p 2w_i(\innerprod{A_i^{\sf T}\bar{X}, Y} - \innerprod{A_i^{\sf T}\bar{U}, V}) A_i^{\sf T} x \\
        &=& \sum_{i=1}^p 2w_i ((\myvec(A_i^{\sf T}\bar{X})\cdot \innerprod{A_i^{\sf T}\bar{X}, Y} - \myvec(A_i^{\sf T}\bar{X})\cdot \innerprod{A_i^{\sf T}\bar{X}, Y})) \\
        &=& \sum_{i=1}^p 2w_i ((I_k \otimes A_i^{\sf T})xx^{\sf T}(I_k \otimes A_i)y  - (I_k \otimes A_i^{\sf T})xu^{\sf T}(I_k \otimes A_i)v )
    \end{eqnarray*}
    We denote
    \[
        S_\omega = \sum_{i=1}^{p}2 w_i \cdot (I_k \otimes A_i^{\sf T})xx^{\sf T}(I_k \otimes A_i),
    \]
    and
    \[
        J_\omega = \sum_{i=1}^{p}2 w_i \cdot (I_k \otimes A_i^{\sf T})xu^{\sf T}(I_k \otimes A_i),
    \]
    So the equation becomes $S_{\omega}y - J_{\omega}v = 0$ and since $S_{\omega}$ is invertible we have $y = (S_{\omega})^{-1} J_{\omega}v$. Meanwhile, we denote
    \[
        G_\omega = \sum_{i=1}^{p}2 w_i \cdot (I_k \otimes A_i^{\sf T})uu^{\sf T}(I_k \otimes A_i)
    \]
    as the Hessian matrix of $\nabla_Y^2 \mathcal{F}(\bar{U}, Y)$. Then, the partial gradient $\nabla_Y \mathcal{F}(\bar{U}, Y)$ can be written as
    \begin{eqnarray*}
        \myvec(\nabla_Y \mathcal{F}(\bar{U}, Y))
        &=& \sum_{i=1}^p 2w_i(\innerprod{A_i^{\sf T}\bar{U}, Y} - \innerprod{A_i^{\sf T}\bar{U}, V}) (I_k \otimes A_i^{\sf T})u \\
        &=& \sum_{i=1}^p 2w_i ((I_k \otimes A_i^{\sf T})uu^{\sf T}(I_k \otimes A_i)y  - (I_k \otimes A_i^{\sf T})uu^{\sf T}(I_k \otimes A_i)v ) \\
        &=& G_{\omega}(y-v) \\
        &=& G_{\omega}(S_{\omega}^{-1}J_{\omega}-I_{nk})v.
    \end{eqnarray*}
    Since we have $\myvec(\nabla_Y \mathcal{F}(\bar{X}, Y)) = 0$, the divergence $\mathcal{D}=\innerprod{\nabla_Y(\bar{U}, Y), (Y-V)/\norm(Y-V)}_F$. So we need to bound $\nabla_Y \mathcal{F}(\bar{U}, Y)$. Let $K := \bar{X}^{\sf T}\bar{U} \otimes I_n$. To get the estimate of $S_{\omega}^{-1}J_{\omega}-I_{nk}$, we rewrite it as
    \[ S_{\omega}^{-1}J_{\omega}-I_{nk} = K-I_{nk} + S_{\omega}^{-1}(J_{\omega} - S_{\omega}K).\]
    We firstly bound the term $(K-I_{nk})v$. Recall $\myvec(AXB) = (B^{\sf T} \otimes A)\myvec(X)$, we have
    \begin{eqnarray*}
        (K-I_{nk})v &=& ((\bar{X}^{\sf T}\bar{U} - I_k)\otimes I_n)v = \myvec(V(\bar{U}^{\sf T}X - I_k)) \\
        \norm{(K-I_{nk})v}_2 &=& \norm{V(\bar{U}^{\sf T}\bar{X} - I_k)}_F \leq \sigma_1 \norm{\bar{U}^{\sf T}\bar{X}-I_k}_F \\
        &\leq& \sigma_1 \norm{(\bar{X}-\bar{U})^{\sf T}(\bar{X}-\bar{U})}_F \leq \sigma_1 \norm{\bar{X}-\bar{U}}_F^2
    \end{eqnarray*}
    We then bound the term $J_{\omega} - S_{\omega}K$. For any two matrices $Z_1, Z_2\in \real{n\times k}$, we denote $z_1 := \myvec(Z_1)$ and $z_2 := \myvec(Z_2)$. Then we have:
    \begin{eqnarray*}
        && z_1^{\sf T}(S_{\omega}K - J_{\omega})z_2 \\
        &=& \sum_{i=1}^{p} 2w_i z_1^{\sf T}(I_k \otimes A_i^{\sf T})x \{x^{\sf T}(I_k \otimes A_i)(\bar{X}^{\sf T}\bar{U}\otimes I_n)) - u^{\sf T}(I_k \otimes A_i)\}z_2 \\
        &=& \sum_{i=1}^{p} 2w_i \innerprod{Z_1, A_i^{\sf T}\bar{X}}\cdot (x^{\sf T}(\bar{X}^{\sf T}\bar{U} \otimes A_i)z_2 - \innerprod{\bar{U}, A_i Z}) \\
        &=& \sum_{i=1}^{p} 2w_i \innerprod{A_i, \bar{X} Z_1^{\sf T}} (\innerprod{A_i, \bar{X}\bar{X}^{\sf T}-I_m)\bar{U}Z_2^{\sf T}} \\
        &\leq& 2w_+\innerprod{\mathcal{A}(\bar{X}Z_1^{\sf T}), \mathcal{A}((\bar{X}\bar{X}^{\sf T}-I_m)\bar{U}Z_2^{\sf T})}
    \end{eqnarray*}
    Since $\bar{X}^{\sf T}(\bar{X}\bar{X}^{\sf T}-I_m)\bar{U} = 0$, by Lemma~\ref{lem:dec1-0} we have
    \begin{eqnarray*}
        && z_1^{\sf T}(S_{\omega}K - J_{\omega})z_2 \\
        &\leq& 2w_+ \cdot 3\delta_{2k}\norm{\bar{X}Z_1^{\sf T}}_F \norm{(\bar{X}\bar{X}^{\sf T}-I_m)\bar{U}Z_2^{\sf T}}_F \\
        &\leq& 6w_+ \delta_{2k} \norm{Z_1}_F \sqrt{\norm{\bar{U}^{\sf T}(\bar{X}\bar{X}^{\sf T}-I_m)\bar{U}}_F \norm{Z_2^{\sf T}Z_2}_F} \\
        &=& 6w_+\delta_{2k} \sqrt{\norm{\bar{U}^{\sf T}(\bar{X}\bar{X}^{\sf T}-I_m)\bar{U}}_F} \\
        &\leq& 6w_+ \delta_{2k}\sqrt{2k} \norm{\bar{X} - \bar{U}}_F.
    \end{eqnarray*}
    Thus, the spectral norm of this term is upper bounded by $6w_+\delta_{2k}\sqrt{2k} \norm{\bar{X} - \bar{U}}_F$ and finally we have
    \begin{eqnarray*}
        \norm{\myvec(\nabla_Y \mathcal{F}(\bar{U}, Y))}_2 &=& \norm{G_{\omega}(S_{\omega}^{-1}J_{\omega}-I_{nk})v}_2 \\
        &\leq& w_+(1+\delta_{2k})(\sigma_1 \norm{\bar{X}-\bar{U}}_F^2 + \frac{1}{(1-\delta_{2k})w_-}\norm{S_{\omega}K - J_{\omega}}_2 \norm{V}_F) \\
        &\leq& w_+(1+\delta_{2k})(\sigma_1 \norm{\bar{X}-\bar{U}}_F^2 + \frac{\sigma_1 \sqrt{k}}{(1-\delta_{2k})w_-}\norm{S_{\omega}K - J_{\omega}}_2) \\
        &\leq& w_+(1+\delta_{2k})\sigma_1(\norm{\bar{X}-\bar{U}}_F^2 + \frac{\sqrt{k}\cdot 6w_+\delta_{2k}\sqrt{2k}}{(1-\delta_{2k})w_-}  \norm{\bar{X} - \bar{U}}_F )  \\
        &\leq& w_+(1+\delta_{2k})\sigma_1(\norm{\bar{X}-\bar{U}}_F^2 + \frac{6\sqrt{2}\cdot w_+\delta_{2k}k}{(1-\delta_{2k})w_-}  \norm{\bar{X} - \bar{U}}_F ).
    \end{eqnarray*}
    Under the given condition, we can upper bound $\norm{\bar{X} - \bar{U}}$ and $\delta_{2k}$ and we go to the final step as follows:
    \begin{eqnarray*}
        \norm{\myvec(\nabla_Y \mathcal{F}(\bar{U}, Y))}_2
        &\leq& \frac{(1-\delta_{2k})\sigma_k w_-}{2\xi}
        + \frac{(1-\delta_{2k})\sigma_k w_-}{2\xi} \\
        &=& \frac{(1-\delta_{2k})\sigma_k w_-}{\xi}
    \end{eqnarray*}
    Thus, the divergence $\mathcal{D}(Y, Y, \bar{X})$ can be upperbounded by
    \begin{equation}
        \mathcal{D}(Y, Y, \bar{X}) \leq \norm{\myvec(\nabla_Y \mathcal{F}(\bar{U}, Y))}_2  \leq \frac{(1-\delta_{2k})\sigma_k w_-}{\xi}
        \norm{\bar{X}^{(t)}-\bar{U}^{(t)}}_F.
    \end{equation}
\end{proof}

\begin{lem}
    \begin{equation}
        \norm{Y^{(t+0.5)} - V^{(t)}}_F \leq \frac{1}{2w_-(1-\delta_{2k})}\mathcal{D}(Y^{(t+0.5)}, Y^{(t+0.5)}, \bar{X}^{(t)}).
    \end{equation}
    \label{lem:dec1-2}
\end{lem}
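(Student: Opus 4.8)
The plan is to obtain this inequality by combining the first-order optimality of $Y^{(t+0.5)}$ with the strong convexity of $\mathcal{F}(\bar{U}^{(t)}, \cdot)$ furnished by Lemma~\ref{lem:convex}. Throughout I will abbreviate $Y := Y^{(t+0.5)}$, $\bar{X} := \bar{X}^{(t)}$, $\bar{U} := \bar{U}^{(t)}$, $V := V^{(t)}$, and $\Delta := Y - V$, so the target is $\norm{\Delta}_F \leq \frac{1}{2w_-(1-\delta_{2k})}\,\mathcal{D}(Y, Y, \bar{X})$.

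First I would simplify the divergence. Since $Y$ is the unconstrained minimizer of $Y' \mapsto \mathcal{F}(\bar{X}, Y')$ produced in Step~5 of Algorithm~\ref{alg:bcd_expectile}, first-order optimality gives $\nabla_Y \mathcal{F}(\bar{X}, Y) = 0$, so the definition of $\mathcal{D}$ collapses to $\mathcal{D}(Y, Y, \bar{X}) = \innerprod{\nabla_Y \mathcal{F}(\bar{U}, Y),\, \Delta/\norm{\Delta}_F}$. Next I would note that $V$ is a \emph{global} minimizer of $Y' \mapsto \mathcal{F}(\bar{U}, Y')$: in the noiseless regime $M^* = \bar{U}V^{\sf T}$, hence $\mathcal{F}(\bar{U}, Y') = \sum_{i=1}^p w_i \innerprod{A_i, \bar{U}(Y'-V)^{\sf T}}^2 \geq 0$ for all $Y'$, with equality at $Y' = V$; therefore $\nabla_Y \mathcal{F}(\bar{U}, V) = 0$.

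The main step is to quantify the curvature of $\mathcal{F}(\bar{U}, \cdot)$. Because $\bar{U}$ is orthonormal (it is the orthonormal factor in $M^* = \bar{U}V^{\sf T}$), Lemma~\ref{lem:convex} applies to $\mathcal{F}(\bar{U}, \cdot)$, and its proof shows that the Hessian $G_\omega := \nabla_Y^2 \mathcal{F}(\bar{U}, \cdot)$ satisfies $\sigma_{\min}(G_\omega) \geq 2w_-(1-\delta_{2k})$. Since $\mathcal{F}(\bar{U}, \cdot)$ is quadratic with Hessian $G_\omega$ and gradient zero at $V$, we have $\myvec(\nabla_Y \mathcal{F}(\bar{U}, Y)) = G_\omega\, \myvec(\Delta)$, so that
\[
\innerprod{\nabla_Y \mathcal{F}(\bar{U}, Y),\, \Delta} = \myvec(\Delta)^{\sf T} G_\omega\, \myvec(\Delta) \geq 2w_-(1-\delta_{2k})\norm{\Delta}_F^2 .
\]
Dividing by $\norm{\Delta}_F$ and using the first-step identity for $\mathcal{D}$ gives $\mathcal{D}(Y, Y, \bar{X}) \geq 2w_-(1-\delta_{2k})\norm{\Delta}_F$, which rearranges to the claim. (Alternatively one could avoid the Hessian and add the strong-convexity inequality of Lemma~\ref{lem:convex} applied at the pairs $(Y, V)$ and $(V, Y)$ to obtain the monotonicity bound $\innerprod{\nabla_Y \mathcal{F}(\bar{U}, Y) - \nabla_Y \mathcal{F}(\bar{U}, V),\, \Delta} \geq 2w_-(1-\delta_{2k})\norm{\Delta}_F^2$ directly, then proceed as above.)

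I do not anticipate a real obstacle: this is a routine pairing of a gradient-vanishing condition with a strong-convexity lower bound, and unlike Lemma~\ref{lem:dec1-1} it involves no RIP cross-term estimates. The only points demanding a little care are justifying the two ``$\nabla_Y(\cdot) = 0$'' identities — the first because $Y$ is an interior minimizer, the second because $\mathcal{F}(\bar{U}, \cdot)$ is a nonnegative quadratic attaining $0$ at $V$ — and confirming that the orthonormality hypothesis of Lemma~\ref{lem:convex} holds, which it does since $\bar{U}$ is precisely the orthonormal factor of $M^*$.
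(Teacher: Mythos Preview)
Your proposal is correct and essentially follows the paper's route. The paper proceeds exactly via your parenthetical alternative: it applies the strong-convexity inequality of Lemma~\ref{lem:convex} for $\mathcal{F}(\bar U,\cdot)$ at the pairs $(Y,V)$ and $(V,Y)$, adds them (using $\nabla_Y\mathcal{F}(\bar U,V)=0$) to obtain $\innerprod{\nabla_Y\mathcal{F}(\bar U,Y),\,Y-V}\geq 2w_-(1-\delta_{2k})\norm{Y-V}_F^2$, and then invokes $\nabla_Y\mathcal{F}(\bar X,Y)=0$ to identify the left side with $\mathcal{D}(Y,Y,\bar X)\,\norm{Y-V}_F$. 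Your primary argument via the Hessian lower bound $\sigma_{\min}(G_\omega)\geq 2w_-(1-\delta_{2k})$ is just the same computation unpacked one level (valid because the paper treats the weights as fixed within an iteration, so $\mathcal{F}(\bar U,\cdot)$ is genuinely quadratic); it yields the identical inequality and constant.
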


\begin{proof}[Proof of Lemma~\ref{lem:dec1-2}]
    Here we utilize the strongly convexity of $\mathcal{F}(X, Y)$ given a orthonormal matrix $X$. By Lemma~\ref{lem:convex}, we have
    \begin{equation}
        \mathcal{F}(\bar{U}, V) \geq \mathcal{F}(\bar{U}, Y) + \innerprod{\nabla_Y \mathcal{F}(\bar{U}, Y), V-Y} + w_-(1-\delta_{2k}) \norm{V-Y}_F^2.
        \label{eq:lem3-1}
    \end{equation}
    Since $V$ minimizes the function $\mathcal{F}(\bar{U}, \hat{V})$, we have $\innerprod{\nabla_Y \mathcal{F}(\bar{U}, V), Y-V} \geq 0$ and thus
    \begin{equation}
    \begin{split}
        \mathcal{F}(\bar{U}, Y) &\geq \mathcal{F}(\bar{U}, V) + \innerprod{\nabla_Y \mathcal{F}(\bar{U}, V), Y-V} + (1-\delta_{2k})w_- \norm{V-Y}_F^2 \\
        &\geq \mathcal{F}(\bar{U}, V) + w_-(1-\delta_{2k}) \norm{V-Y}_F^2.
        \label{eq:lem3-2}
    \end{split}
    \end{equation}
    Add \eqref{eq:lem3-1} and \eqref{eq:lem3-2} we have
    \begin{equation}
        \innerprod{\nabla_Y \mathcal{F}(\bar{U}, Y), Y-V} \geq 2w_-(1-\delta_{2k}) \norm{V-Y}_F^2.
    \end{equation}
    Since $Y$ also minimizes $\mathcal{F}(\bar{X}, \hat{Y})$, we have $\innerprod{\nabla_Y \mathcal{F}(\bar{X}, V), V-Y} \geq 0$ and thus
    \begin{equation}
        \begin{split}
            \innerprod{\nabla_Y \mathcal{F}(\bar{U}, Y)-\nabla_Y \mathcal{F}(\bar{X}, Y), Y-V} &\geq \innerprod{\nabla_Y \mathcal{F}(\bar{U}, Y), Y-V} \\
            &\geq 2w_-(1-\delta_{2k})\norm{V-Y}_F^2.
        \end{split}
    \end{equation}
    Therefore, we have
    \begin{equation}
        \norm{V-Y}_F \leq \frac{1}{2w_-(1-\delta_{2k})}\mathcal{D}(Y, Y, \bar{X})
    \end{equation}
\end{proof}

Given Lemma~\ref{lem:dec1-1} and Lemma~\ref{lem:dec1-2}, we can now bound $\norm{Y^{(t+0.5)} - V^{(t)}}_F$ and thus prove Lemma~\ref{lem:dec1}.
\begin{proof}[Proof of Lemma~\ref{lem:dec1}]
    From Lemma~\ref{lem:dec1-1}, we have
    \[
        \mathcal{D}(Y^{(t+0.5)},Y^{(t+0.5)},\bar{X}^{(t)})
        \leq \frac{(1-\delta_{2k})\sigma_k w_-}{\xi}  \norm{\bar{X}^{(t)}-\bar{U}^{(t)}}_F,
    \]
    and from Lemma~\ref{lem:dec1-2}, we have
    \[ \norm{Y^{(t+0.5)} - V^{(t)}}_F \leq \frac{1}{2w_-(1-\delta_{2k})} \mathcal{D}(Y^{(t+0.5)}, Y^{(t+0.5)}, \hat{X}^{(t)}). \]
    Therefore,
    \begin{eqnarray}
    && \norm{Y^{(t+0.5)} - V^{(t)}}_F  \\
    &\leq& \frac{(1-\delta_{2k})\sigma_k w_-}{\xi} \cdot \frac{1}{2 w_- (1-\delta_{2k})} \norm{\bar{X}^{(t)}-\bar{U}^{(t)}}_F \\
    &=& \frac{\sigma_k}{2\xi} \norm{\bar{X}^{(t)}-\bar{U}^{(t)}}_F
    \end{eqnarray}
\end{proof}

\subsection{Proof of Lemma~\ref{lem:descent}}
\label{proof:descent}

From Lemma~\ref{lem:dec1}, we have
\begin{eqnarray}
    \norm{Y^{(0.5)} - V^{(t)}}_F
    &\leq& \frac{\sigma_k}{2\xi} \norm{\bar{X}^{(t)}-\bar{U}^{(t)}_F} \\
    &\leq& \frac{(1-\delta_{2k})\sigma_k w_-}{2\xi^2(1+\delta_{2k})\sigma_1 w_+} \leq \frac{\sigma_k}{4}, \label{eq:dec2-2}
\end{eqnarray}
where \eqref{eq:dec2-2} is from $\xi>1$. Thus, we can see from Lemma\ref{lem:qr_dec} and we obtain that
\begin{equation}
    \norm{\bar{Y}^{(t+1)} - \bar{V}^{(t+1)}}_F \leq \frac{2}{\sigma_k} \norm{Y^{(0.5)} - V^{(t)}}_F
    \leq \frac{1}{\xi} \norm{\bar{X}^{(t)}-\bar{U}^{(t)}} \leq \frac{(1-\delta_{2k})\sigma_k w_-}{2\xi(1+\delta_{2k})\sigma_1 w_+}.
\end{equation}

\subsection{Proof of Theorem~\ref{thm:weighted}}
\label{proof:weighted}

\begin{lem}
    Suppose that $\delta_{2k}$ satisfies
    \[\delta_{2k} \leq \frac{w_-^2(1-\delta_{2k})^2 \sigma_{k}^4}{48\xi^2 k w_+^2(1+\delta_{2k})^2 \sigma_1^4}.\]
    Then there exists a factorization of $M^* = \bar{U}^{0}V^{(0)\sf T}$ such that $\bar{U}^{(0)} \in \real{m \times k}$ is an orthonormal matrix, and satisfies
    \[\norm{\bar{X}^{(0)} - \bar{U}^{(0)}}_F \leq \frac{w_-(1-\delta_{2k})\sigma_k}{2\xi w_+(1+\delta_{2k})\sigma_1}.\]
    \label{lem:initial}
\end{lem}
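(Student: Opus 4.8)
The plan is to view Step~3 of Algorithm~\ref{alg:bcd_expectile} as extracting the top-$k$ left singular subspace of a small perturbation of $M^*$, and then to invoke a Davis--Kahan/Wedin $\sin\Theta$ bound. In the noiseless regime $b_i=\innerprod{A_i,M^*}$, so the matrix decomposed in Step~3 is
\[
N:=\sum_{i=1}^p b_iA_i=\sum_{i=1}^p\innerprod{A_i,M^*}A_i=\mathcal{A}^*\bigl(\mathcal{A}(M^*)\bigr),
\]
the adjoint of $\mathcal{A}$ applied to $\mathcal{A}(M^*)$. First I would bound $\norm{N-M^*}_2$: for any unit vectors $x\in\real{m}$, $y\in\real{n}$, applying Lemma~\ref{lem:dec1-0} with the two low-rank matrices taken to be $M^*$ (rank $\le k$) and $xy^{\sf T}$ (rank $1\le k$), and using $\innerprod{\mathcal{A}(M^*),\mathcal{A}(xy^{\sf T})}=\innerprod{\mathcal{A}^*\mathcal{A}(M^*),xy^{\sf T}}=\innerprod{N,xy^{\sf T}}$, gives $\bigl|\innerprod{N-M^*,xy^{\sf T}}\bigr|\le 3\delta_{2k}\norm{M^*}_F$. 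Taking the supremum over $x,y$ and using $\norm{M^*}_F\le\sqrt{k}\,\sigma_1$ yields $\norm{N-M^*}_2\le 3\sqrt{k}\,\delta_{2k}\sigma_1$.

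Next I would apply the spectral-norm Wedin bound. Since $M^*$ has rank $k$, its $(k{+}1)$st singular value is $0$ while $\sigma_k(M^*)=\sigma_k\ge\Sigma_{\min}$, so there is a spectral gap of $\sigma_k$ at position $k$; moreover the stated hypothesis forces $\delta_{2k}\le\sigma_k/(6\sqrt{k}\,\sigma_1)$, hence $\norm{N-M^*}_2\le\sigma_k/2$. Therefore the principal angles $\Theta$ between $\mathrm{span}(\bar X^{(0)})$ and the top-$k$ left singular subspace of $M^*$ satisfy $\norm{\sin\Theta}_2\le 2\norm{N-M^*}_2/\sigma_k$, and since there are at most $k$ principal angles $\norm{\sin\Theta}_F\le\sqrt{k}\,\norm{\sin\Theta}_2$. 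I would then write the thin SVD $M^*=\bar U D\bar V^{\sf T}$ with $\bar U,\bar V$ orthonormal, and set $\bar U^{(0)}:=\bar U Q$ and $V^{(0)}:=\bar V D Q$, where $Q\in\real{k\times k}$ is the orthogonal polar factor of $\bar U^{\sf T}\bar X^{(0)}$; then $M^*=\bar U^{(0)}V^{(0)\sf T}$ with $\bar U^{(0)}$ orthonormal, and the identity $\norm{\bar X^{(0)}-\bar U^{(0)}}_F^2=2\sum_i(1-\cos\theta_i)\le 2\sum_i\sin^2\theta_i$ gives $\norm{\bar X^{(0)}-\bar U^{(0)}}_F\le\sqrt2\,\norm{\sin\Theta}_F$. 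Chaining the three estimates,
\[
\norm{\bar X^{(0)}-\bar U^{(0)}}_F\le\sqrt2\cdot\sqrt k\cdot\frac{2}{\sigma_k}\cdot 3\sqrt k\,\delta_{2k}\sigma_1=\frac{6\sqrt2\,k\,\delta_{2k}\,\sigma_1}{\sigma_k}.
\]

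Finally I would check that the hypothesis $\delta_{2k}\le\frac{w_-^2(1-\delta_{2k})^2\sigma_k^4}{48\xi^2 k w_+^2(1+\delta_{2k})^2\sigma_1^4}$ makes the last display at most $\frac{w_-(1-\delta_{2k})\sigma_k}{2\xi w_+(1+\delta_{2k})\sigma_1}$: rearranging, it suffices to have $\delta_{2k}\le\frac{w_-(1-\delta_{2k})\sigma_k^2}{12\sqrt2\,\xi k w_+(1+\delta_{2k})\sigma_1^2}$, and the ratio of the hypothesised bound to this one is $\frac{\sqrt2\,w_-(1-\delta_{2k})\sigma_k^2}{4\xi w_+(1+\delta_{2k})\sigma_1^2}<1$ because $w_-\le w_+$, $1-\delta_{2k}\le 1+\delta_{2k}$, $\sigma_k\le\sigma_1$ and $\xi>1$, so the stated (squared) condition is the stronger one and implies the claim. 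The same computation also confirms the auxiliary bound $\norm{N-M^*}_2\le\sigma_k/2$ used above, since the hypothesised bound is below $\sigma_k^4/(48k\sigma_1^4)\le\sigma_k/(6\sqrt k\,\sigma_1)$.

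I expect the perturbation step to be the main obstacle: one must observe that $N$ is itself not low-rank, so only the gap in the spectrum of $M^*$ is available; one must use the spectral-norm rather than the Frobenius-norm Wedin inequality; and one must carefully track how passing from $\norm{\sin\Theta}_2$ to the Frobenius alignment distance $\norm{\bar X^{(0)}-\bar U^{(0)}}_F$ introduces the factor $\sqrt k$ (from converting between spectral and Frobenius norms on a $k$-dimensional subspace) together with the $\sqrt2$ from the polar alignment. It is precisely the resulting $k\,\sigma_1/\sigma_k$ dependence that the hypothesis on $\delta_{2k}$ is calibrated to absorb.
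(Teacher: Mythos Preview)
Your argument is correct and reaches the same conclusion, but the route differs from the paper's. The paper does not bound the spectral perturbation $\norm{N-M^*}_2$ directly; instead it interprets Step~3 as a single iterate of SVP starting from $0$ with step size $1/(1+\delta_{2k})$ and invokes Lemma~\ref{lem:rip_svp} to obtain a \emph{Frobenius} bound $\norm{\bar X^{(0)}D^{(0)}\bar Y^{(0)\sf T}/(1+\delta_{2k})-M^*}_F^2\le 6\delta_{2k}\norm{M^*}_F^2$. From this it reads off $\norm{(I-\bar X^{(0)}\bar X^{(0)\sf T})M^*}_F^2\le 6\delta_{2k}\norm{M^*}_F^2$, translates to $\norm{\bar X_\perp^{(0)\sf T}\bar U^*}_F^2\le 6\delta_{2k}k\,\sigma_1^2/\sigma_k^2$, and finishes with the same Procrustes/$\sin\Theta$ step $\norm{\bar X^{(0)}-\bar U^*O}_F\le\sqrt2\,\norm{\bar X_\perp^{(0)\sf T}\bar U^*}_F\le 2\sqrt{3k\delta_{2k}}\,\sigma_1/\sigma_k$, which the hypothesis on $\delta_{2k}$ matches \emph{exactly}. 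Your approach instead uses only Lemma~\ref{lem:dec1-0} plus a spectral Wedin bound; this is more self-contained (it avoids the SVP machinery of Lemma~\ref{lem:rip_svp}) and actually yields a sharper intermediate estimate, $O(k\delta_{2k}\sigma_1/\sigma_k)$ versus the paper's $O(\sqrt{k\delta_{2k}}\,\sigma_1/\sigma_k)$, so the stated hypothesis is used with slack in your proof while it is tight in the paper's. The trade-off is that you pay an extra $\sqrt k$ when passing from $\norm{\sin\Theta}_2$ to $\norm{\sin\Theta}_F$, whereas the paper works in Frobenius norm throughout; both routes are legitimate, and your constant check that the squared hypothesis implies the linear requirement is correct.
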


\begin{proof}[Proof of Lemma~\ref{lem:initial}]
    \label{proof:initial}
    The initialization step can be regarded as taking a step iterate of singular value projection (SVP) as taking $M^{(t)} = 0$ and the next iterate with the step size $1/(1+\delta_{2k})$ will result $M^{(t+1)} = \bar{X}^{(0)} D^{(0)} \bar{Y}^{(0)}/ (1+\delta_{2k})$, where $\bar{X}^{(0)}$,$D^{(0)}$ and $\bar{Y}^{(0)}$ are from the top $k$ singular value decomposition of $\sum_{i=1}^p b_i A_i$.

    Then, by Lemma~\ref{lem:rip_svp} and the fact that $\varepsilon = 0$, we have
    \begin{equation}
        \left\lVert \mathcal{A}(\frac{\bar{X}^{(0)} D^{(0)} \bar{Y}^{(0)}}{(1+\delta_{2k})}) - \mathcal{A}(M^*)\right\rVert_2^2 \leq 4\delta_{2k} \norm{0 - \mathcal{A}(M^*)}_2^2.
    \end{equation}
    From the $2k$-RIP condition, we have
    \begin{eqnarray*}
        \left\lVert \frac{\bar{X}^{(0)} D^{(0)} \bar{Y}^{(0)}}{(1+\delta_{2k})} \right\rVert
        &\leq& \frac{1}{1-\delta_{2k}}\left\lVert \mathcal{A}(\frac{\bar{X}^{(0)} D^{(0)} \bar{Y}^{(0)}}{(1+\delta_{2k})}) - \mathcal{A}(M^*)\right\rVert_2^2  \\
        &\leq& \frac{4 \delta_{2k}}{1-\delta_{2k}} \norm{\mathcal{A}(M^*)}_2^2 \\
        &\leq& \frac{4 \delta_{2k}(1+\delta_{2k})}{1-\delta_{2k}} \norm{M^*}_F^2
        \leq 6\delta_{2k}\norm{M^*}_F^2.
    \end{eqnarray*}
    Then, we project each column of $M^*$ into the column subspace of $\bar{X}^{(0)}$ and obtain
    \begin{equation*}
        \norm{(\bar{X}^{(0)}\bar{X}^{(0){\sf T}} - I)M^*}_F^2 \leq 6\delta_{2k}\norm{M^*}_F^2.
    \end{equation*}
    We denote the orthonormal complement of $\bar{X}^{(0)}$ as $\bar{X}_{\bot}^{(0)}$. Then, we have
    \begin{equation*}
        \frac{6\delta_{2k}k \sigma_1^2}{\sigma_k^2} \geq \norm{\bar{X}_{\bot}^{(0){\sf T}}\bar{U}^*}_F^2,
    \end{equation*}
    where $\bar{U}^*$ is from the singular value decomposition of $M^*=\bar{U}D\bar{V {\sf T}} $. Then, there exists a unitary matrix $O \in \real{k \times k}$ such that $O^{\sf T}O = I_k$ and
    \[\norm{\bar{X}^{(0)} - \bar{U}^{*}O}_F \leq \sqrt{2}\norm{\bar{X}_{\bot}^{(0){\sf T}}\bar{U}^*}_F \leq 2\sqrt{3\delta_{2k}\frac{\sigma_1}{\sigma_k}}.\]
    By taking the condition of $\delta_{2k}$, we have
    \begin{equation}
        \norm{\bar{X}^{0} - \bar{U}^*}_F \leq \frac{(1-\delta_{2k})\sigma_k w_-}{2\xi(1+\delta_{2k})\sigma_1 w_+}.
    \end{equation}
\end{proof}

\begin{proof}[Proof of Theorem~\ref{thm:weighted}]
    \label{proof:weighted}
    The proof of Theorem~\ref{thm:weighted} can be done by induction. Firstly, we note that Lemma~\ref{lem:initial} ensures that the initial $\bar{X}^{(0)}$ is close to a $\bar{U}^{(0)}$. Then, by Lemma~\ref{lem:qr_dec} we have the following sequence of inequalities for all $T$ iterations:
    \begin{equation}
        \norm{\bar{Y}^{(T)} - \bar{V}^{(T)}}_F \leq \frac{1}{\xi} \norm{\bar{X}^{(T-1)} - \bar{U}^{(T-1)}}_F\leq \cdots \leq \frac{1}{\xi^{2T-1}}\norm{\bar{X}^{(0)} - \bar{U}^{(0)}}_F \leq \frac{(1-\delta_{2k})\sigma_{k}w_-}{2\xi^{2T} (1+\delta_{2k})\sigma_1 w_+}.
    \end{equation}
    Therefore, we can bound the right most term by $\varepsilon / 2$ for any given precision $\varepsilon$. By algebra, we can derive the required number of iterations $T$ as:
    \[ T \geq \frac{1}{2} \log\left(\frac{(1-\delta_{2k})\sigma_k w_-}{2\varepsilon(1+\delta_{2k})\sigma_1 w_+}\right)\log^{-1}\xi.\]
    Similarly, we can also bound $\norm{X^{(T-0.5)} - U^{(T)}}_F$,
    \begin{equation}
        \norm{X^{(T-0.5)} - U^{(T)}}_F \leq \frac{\sigma_k}{2\xi} \norm{\bar{Y}^{(T)} - \bar{V}^{(T)}}_F\leq \frac{(1-\delta_{2k})\sigma_{k}^2 w_-}{4\xi (1+\delta_{2k})\sigma_1 w_+}.
    \end{equation}
    To make it smaller than $\varepsilon \sigma_1 / 2$, we need the number of iterations as
    \[ T \geq \frac{1}{2} \log\left(\frac{(1-\delta_{2k})\sigma_k^2 w_-}{4\varepsilon(1+\delta_{2k})\sigma_1 w_+}\right)\log^{-1}\xi.\]
    Combining all results we have
    \begin{eqnarray}
        \norm{M^{(T)} - M^*}_F
        &=& \norm{X^{(T-0.5)}\bar{Y}^{(T){\sf T}} - U^{(T)}\bar{V}^{(T){\sf T}}}_F \nonumber \\
        &=& \norm{X^{(T-0.5)}\bar{Y}^{(T){\sf T}} - U^{(T)}\bar{Y}^{(T){\sf T}} + U^{(T)}\bar{Y}^{(T){\sf T}} - U^{(T)}\bar{V}^{(T){\sf T}}}_F \nonumber \\
        &\leq& \norm{\bar{Y}^{(T){\sf T}}}_2 \norm{X^{(T-0.5)} - U^{(T)}}_F + \norm{U^{(T)}}_2 \norm{\bar{Y}^{(T)} - \bar{V}^{(T)}}_F \leq \varepsilon.
    \end{eqnarray}
    Here we use the fact that the orthonormal matrix $\bar{V}^{(T)}$ leads to $\norm{\bar{V}^{(T)}}_2 = 1$, and $\norm{M^*}_2 = \norm{U^{(T)} \bar{V}^{(T){\sf T}}}_2 = \norm{U^{(T)}}_2 = \sigma_1$. Now we complete the proof of Theorem~\ref{thm:weighted}.
\end{proof}

\end{document}